\documentclass[journal, 10pt]{IEEEtran}
\usepackage{amsfonts}%
\usepackage{amssymb}%
\usepackage{setspace}
\usepackage[dvips]{graphicx}%
\usepackage{amsmath}%
\usepackage[usenames]{color}
\usepackage{algorithmic}
\usepackage{cite}
\usepackage{array}
\newtheorem{theorem}{Theorem}
\newtheorem{corollary}{Corollary}
\newtheorem{lemma}{Lemma}
\newtheorem{assumption}{Assumption}
\newtheorem{definition}{Definition}
\newtheorem{example-non*}{Example}

\allowdisplaybreaks 

\DeclareMathOperator*{\argmin}{arg\,min}
\DeclareMathOperator*{\argmax}{arg\,max}

\newcommand{\comment}[1]{}

\ifodd 1
\newcommand{\remove}[1]{}
\newcommand{\add}[1]{#1}
\else
\newcommand{\remove}[1]{#1}
\newcommand{\add}[1]{}
\fi

\ifodd 0
\newcommand{\bremove}[1]{}
\newcommand{\badd}[1]{#1}
\else
\newcommand{\bremove}[1]{#1}
\newcommand{\badd}[1]{}
\fi

\ifodd 1
\newcommand{\rmv}[1]{}
\else
\newcommand{\rmv}[1]{{\color{red}#1}}
\fi

\ifodd 0
\newcommand{\newc}[1]{{\color{blue}#1}} 
\else
\newcommand{\newc}[1]{#1}
\fi

\ifodd 0
\newcommand{\rev}[1]{{\color{blue}#1}} 
\newcommand{\cem}[1]{{\color{magenta}#1}} 
\newcommand{\com}[1]{\textbf{\color{red}(COMMENT: #1)}} 
\newcommand{\clar}[1]{\textbf{\color{green}(NEED CLARIFICATION: #1)}}


\else
\newcommand{\rev}[1]{#1}
\newcommand{\cem}[1]{#1}

\newcommand{\com}[1]{}
\newcommand{\clar}[1]{}
\fi


\begin{document}
\title{Decentralized Online Big Data Classification - a Bandit Framework}
\author{\IEEEauthorblockN{Cem Tekin*,~\IEEEmembership{Member,~IEEE}, Mihaela van der Schaar, ~\IEEEmembership{Fellow,~IEEE}\\}
\IEEEauthorblockA{Electrical Engineering Department,
University of California, Los Angeles\\
Email: cmtkn@ucla.edu, mihaela@ee.ucla.edu}
\thanks{A preliminary version of this work appeared in Allerton 2013. The work is partially supported by the grants NSF CNS 1016081 and  AFOSR DDDAS.}
}

\maketitle

\begin{abstract}
Distributed, online data mining systems have emerged as a result of applications requiring analysis of large amounts of correlated and high-dimensional data produced by multiple distributed data sources. 
\rmv{To correctly identify the events and phenomena of interest, different classification functions are required in order to process data exhibiting different \cem{features/}characteristics.}
\rmv{However, a learner may
not have access to the complete set of necessary classification functions to process a certain type of data. 
Moreover, the accuracy of each classification function for the incoming data stream is unknown a priori and
needs to be learned online.}
\cem{We propose a} distributed online data classification framework
where data is gathered by distributed data sources and processed by a heterogeneous set of distributed learners which
learn online, at run-time, 
how to classify the different data streams either by using their locally available classification functions or by helping each other by classifying each other's data. 
\rev{Importantly, since the data is gathered at different locations, sending the data to another learner to process incurs additional costs such as delays, and hence this will be only beneficial if the benefits obtained from a better classification will exceed the costs.}
We assume that the classification functions available to each processing element are fixed, but their prediction
accuracy for various types of incoming data are unknown and can change dynamically over time, and thus they need to be learned online.
%
We model \cem{the problem of joint classification by the
distributed and heterogeneous learners from multiple data sources}
as a cooperative contextual
bandit problem where each data 
is characterized by
a specific context. We develop distributed online learning algorithms for which we can prove that they have sublinear regret\rmv{, i.e. the
average error probability converges to the error probability of the best distributed classification scheme given the context information}.
\rmv{Our bounds hold uniformly over time, without requiring any assumptions about of the types of classification functions used.}
Compared to prior work in distributed online data mining, our work is the first to provide analytic regret results characterizing the performance of the proposed algorithms.
\comment{\cem{We also relate our distributed contextual learning approach to the notion of concept drift, which was introduced to address non-stationary learning problems over time, and we show that sublinear regret can be achieved when the concept drift is gradual without requiring a drift detection mechanism.}}
\rmv{Finally, we illustrate our proposed solutions using \cem{distributed} online data mining systems for network security and compare our results with existing state of the art solutions for online data mining.}
\end{abstract}

\begin{IEEEkeywords}
distributed online learning, Big Data mining, online classification, exploration-exploitation tradeoff, decentralized classification, contextual bandits
\end{IEEEkeywords}

\add{\vspace{-0.2in}}
\section{Introduction}\label{sec:intro}

A plethora of Big Data applications (network security, surveillance, health monitoring, stock market prediction, intelligent traffic management, etc.) are emerging which
require online classification of large data sets collected from distributed network and traffic monitors, multimedia sources, sensor networks, etc.
This data is heterogeneous and dynamically evolves over time.
%
In this paper, we introduce a distributed online learning framework for classification of high-dimensional data
collected by distributed data sources. 

\cem{The distributedly collected data is processed by a set of decentralized heterogeneous learners equipped with classification functions with unknown accuracies.} \rev{In this setting communication, computation and sharing costs make it infeasible to use centralized data mining techniques where a single learner can access the entire data set.}
\rev{For example, in a wireless sensor surveillance network, nodes in different locations collect information about different events. 
The learners at each node of the network may run different classification algorithms, may have different resolution, processing speed, etc. 
\rmv{An event may happen rarely in one location while frequently in another location. Therefore, if the context implies that the event is a rare event, then it can be sent to a learner for which this event happens frequently to be classified.}
%
}

%
%
The input data stream and its associated context can be time-varying and heterogeneous. \cem{We use the term ``context'' generically, to represent}
any information related to the input data stream such as time, location and type \cem{(e.g., data features/characteristics/modality)} information.
Each learner can process (label) the incoming data in two different ways: either it can 
exploit its own information and its own classification functions or it can forward its input stream to another learner (possibly by incurring some cost) to have it labeled. A learner learns the accuracies of its own classification functions or other learners in an online way by comparing the result of the predictions with the true label of its input stream which is revealed at the end of each slot. The goal of each learner is to maximize its long term expected total reward, which is the expected number of correct labels minus the costs of classification. In this paper the cost is a generic term that can represent any known cost such as processing cost, delay cost, communication cost, etc. Similarly, data is used as a generic term. It can represent files of several Megabytes size, chunks of streaming media packets or contents of web pages.
\cem{A key differentiating
feature of our proposed approach is the focus on how the context information of the captured data can be utilized to maximize the
classification performance of a distributed data mining system.}
\rev{We consider cooperative learners which classify other's data when requested, but instead of maximizing the system utility function, a learner's goal is to maximize its individual utility. However, it can be shown that when the classification costs capture the cost to the learner which is cooperating with another learner to classify its data, maximizing the individual utility corresponds to maximizing the system utility.}

To jointly optimize the performance of the distributed data mining system, we design distributed online learning algorithms whose long-term average rewards converge to the best distributed solution which can be obtained for the classification problem given complete knowledge of online data characteristics as well as their
classification function accuracies and costs when applied to this data. \newc{We adopt the novel cooperative contextual bandit framework we proposed in \cite{cem2013deccontext} to design these algorithms. As a performance measure,
we define the regret as the difference between the expected total reward of the best distributed classification scheme given complete knowledge about classification function accuracies and the expected total reward of the algorithm used by each learner.}
We prove sublinear upper bounds on the regret,
which imply that the average reward converges to the optimal average reward. The upper bound on regret gives a lower bound on convergence rate to the optimal average reward.
\newc{Application of the general framework proposed in \cite{cem2013deccontext} to distributed Big Data mining is not straightforward. In this paper, we address many required innovations for stream mining such as missing labels, delayed labels, asynchronous arrivals, ensemble learners and unsupervised learners who never receive a label but just learn from others. 
}

\newc{Besides the theoretical results, we show that our distributed contextual learning framework can be used to deal with {\em concept drift} \cite{minku2010impact}, which occurs when the distribution of problem instances changes over time. 
Big data applications are often characterized by concept drift,
in which trending topics change rapidly over time.
To illustrate our approach, we provide numerical results by applying our learning algorithms to the classification of network security data and compare the results with
existing state-of-the-art solutions.
}
For example, a network security application needs to analyze several Gigabytes of data generated by different locations and/or at different time in order to detect malicious network behavior (see e.g., \cite{ishibashi2005detecting}). The context in this case can be the time of the day (since the network traffic depends on the time of the day) or it can be the IP address of the machine that sent the data (some locations may be associated with higher malicious activity rate) or context can be two dimensional capturing both the time and the location. 
%
%
In our model, since the classification accuracies are not known a priori, the network security application needs to learn which one to select based on the context information available about the network data. We note that our online learning framework does not require any prior knowledge about the network traffic characteristics or network topology but the security application learns the best actions from its past observations and decisions. In another example, context can be the information about a priori probability about the origin of the data that is sent to the network manager by routers in different locations.

The remainder of the paper is organized as follows. In Section \ref{sec:related}, we describe the related work and highlight the differences from our work. In Section \ref{sec:probform}, we describe the decentralized data classification problem, the optimal distributed classification scheme given the complete system model, its computational complexity, and the regret of a learning algorithm with respect to the optimal classification scheme. 
Then, we consider the model with unknown system statistics and propose distributed online learning algorithms in Section \ref{sec:iid}. 
%
\newc{Several extensions to our proposed learning algorithms are given 
%
%
in Section \ref{sec:discuss}, including concept drift, ensemble learning, operation under privacy and communication constraints.}
Using a network security application we provide numerical results on the performance of our distributed online learning algorithms in Section \ref{sec:numerical}. Finally, the concluding remarks are given in Section \ref{sec:conc}.

\vspace{-0.1in}
\section{Related Work} \label{sec:related}


Online learning in distributed data classification systems aims to address the informational decentralization, communication costs and privacy issues arising in these systems. 
Specifically, in online ensemble learning techniques, the predictions of decentralized and heterogeneous classifiers are combined to improve the classification accuracy.
In these systems, each classifier learns at different rates because either each learner observes the entire feature space but has access to a subset of instances of the entire data set, which is called {\em horizontally distributed} data, or each learner has access to only a subset of the features but the instances can come from the entire data set, which is called {\em vertically distributed} data.  
For example in \cite{predd2006distributed, perez2010robust, breiman1996bagging, wolpert1992stacked}, various solutions are proposed for distributed data mining problems of horizontally distributed data, while in \cite{zheng2011attribute, yubig2013}, ensemble learning techniques are developed that exploit the correlation between the local learners for vertically distributed data. Several cooperative distributed data mining techniques are proposed in \cite{mateos2010distributed, chen2004channel, kargupta1999collective, yubig2013}, where the goal is to improve the prediction accuracy with costly communication between local predictors. In this paper, we take a different approach: instead of focusing on the characteristics of a specific data stream, we focus on the characteristics of data streams with the same context information.
\cem{This novel approach allows us to deal with both horizontally and vertically distributed data in a unified manner within a distributed data mining system.}
%
%
Although our framework and illustrative results are depicted using horizontally distributed data, if context is changed to be the set of relevant features, then our framework and results can operate on vertically distributed data.
Moreover, we assume no prior knowledge of the data and context arrival processes and classification function accuracies, and the learning is done in a non-Bayesian way. \cem{Learning in a non-Bayesian way is appropriate in decentralized system since learners often do not have correct beliefs about the distributed system dynamics.}
%
%

\rev{Most of the prior work in distributed data mining provides algorithms which are asymptotically converging to an optimal or locally-optimal solution without providing any rates of convergence.}
On the contrary, we do not only prove convergence results, but we are also able to explicitly characterize the performance loss incurred at each time step with respect to the optimal solution. In other words, we prove regret bounds that hold uniformly over time. Some of the existing solutions (including \cite{sewell2008ensemble, alpaydin2004introduction, mcconnell2004building, breiman1996bagging, wolpert1992stacked, buhlmann2003boosting, lazarevic2001distributed, perlich2011cross}) propose ensemble learning techniques including bagging, boosting, stacked generalization and cascading, where the goal is to use classification results from several classifiers to increase the prediction accuracy. 
In our work we only consider choosing the best classification function (initially unknown) from a set of classification functions that are accessible by decentralized learners. 
\newc{However, our proposed distributed learning methods can easily be adapted to perform ensemble learning (see Section \ref{sec:ensemble}).}
We provide a detailed comparison to our work in Table \ref{tab:comparison1}.

\comment{Our contextual framework can also deal with concept drift \cite{minku2010impact}. Formally, a concept is the distribution of the problem, \cem{i.e., the joint distribution of the input data stream, true labels and context information,}
at a certain point of time \cite{narasimhamurthy2007framework}. Concept drift is a change in this distribution \cite{gama2004learning, gao2007appropriate}. By treating time as the context, the same regret bounds of our learning algorithms hold under concept drift, without requiring any drift detection mechanism as required in some of the existing solutions employed for dealing with concept drift \cite{baena2006early, minku2012ddd}. Hence, our proposed framework can be used to formalize and solve the problem of concept drift
which was mainly dealt with previously in an ad-hoc manner.}

Other than distributed data mining, our learning framework can be applied to any problem that can be formulated as a decentralized contextual bandit problem \cite{cem2013deccontext}. Contextual bandits have been studied before in \cite{slivkins2009contextual, dudik2011efficient, langford2007epoch, chu2011contextual} in a single agent setting, where the agent sequentially chooses from a set of alternatives with unknown rewards, and the rewards depend on the context information provided to the agent at each time step. 
The main difference of our work from single agent contextual bandits is that: (i) a three phase learning algorithm with {\em training}, {\em exploration} and {\em exploitation} phases are needed instead of the standard two phase, i.e., {\em exploration} and {\em exploitation} phases, algorithms used in centralized contextual bandit problems; (ii) the adaptive partitions of the context space should be formed in a way that each learner can efficiently utilize what is learned by other learners about the same context. \newc{We have provided a detailed discussion of decentralized contextual bandits in \cite{cem2013deccontext}.}

\begin{table}[t]
\centering
{\renewcommand{\arraystretch}{0.6}
{\fontsize{8}{7}\selectfont
\setlength{\tabcolsep}{.1em}
\begin{tabular}{|l|c|c|c|c|c|}
\hline
&  \cite{breiman1996bagging, buhlmann2003boosting, lazarevic2001distributed, chen2004channel, perlich2011cross} & \cite{mateos2010distributed, kargupta1999collective} &  \cite{zheng2011attribute} & This work \\
\hline
Aggregation & non-cooperative & cooperative & cooperative & \rev{no} \\
\hline
Message  & none & data & training  & data and label \\
exchange & & & residual & only if improves  \\
& & & &   performance \\
\hline
Learning  & offline/online & offline & offline & Non-bayesian \\
approach&&&& online\\
\hline
Correlation & N/A & no & no & yes\\
exploitation & & & &\\
\hline
Information from  & no & all & all & only if improves  \\
other learners & & & &  accuracy \\
\hline
Data partition & horizontal & horizontal & vertical & horizontal \\
&&&& and vertical \\
\hline
Bound on regret,  & no &no &no &yes - sublinear\\
convergence rate &&&&\\
\hline
\end{tabular}
}
}
\caption{Comparison with related work in distributed data mining.}
\label{tab:comparison1}
\vspace{-0.2in}
\end{table}

\comment{
\begin{table}[t]
\centering
{\fontsize{8}{6}\selectfont
\setlength{\tabcolsep}{.25em}
\vspace{-0.2in}
\begin{tabular}{|l|c|c|c|c|c|}
\hline
&\cite{slivkins2009contextual, dudik2011efficient, langford2007epoch, chu2011contextual} &  \cite{hliu1, anandkumar, tekin2012sequencing} & \cite{tekin4} & This work \\
\hline
Multi-user & no & yes & yes & yes \\
\hline
Cooperative & N/A & yes & no & yes \\
\hline
Contextual & yes & no & no & yes \\
\hline
Data arrival  & arbitrary & i.i.d. or Markovian & i.i.d. & i.i.d or arbitrary \\
process& & & & \\
\hline
Regret & sublinear & logarithmic & may be linear & sublinear \\
\hline
\end{tabular}
}
\caption{Comparison with related work in multi-armed bandits}
\vspace{-0.35in}
\label{tab:comparison2}
\end{table}
}
\vspace{-0.1in}
\section{Problem Formulation}\label{sec:probform}

The system model is shown in Fig. \ref{fig:system}. There are $M$ learners which are indexed by the set ${\cal M} := \{1,2,\ldots,M\}$.
Let ${\cal M}_{-i} := {\cal M} - \{i\}$ be the set of learners learner $i$ can choose from to send its data for classification.
\cem{These learners work in a discrete time setting $t=1,2,\ldots,T$, where the following events happen sequentially, in each time slot: (i) a data stream $s_i(t)$ with a specific context $x_i(t)$ arrives to each learner $i \in {\cal M}$, (ii) each learner chooses one of its own classification functions or another learner to send its data and context, and produces a label based on the prediction of its own classification function or the learner to which it sent its data and context, (iii) the truth (true label) is revealed eventually, perhaps by events or by a supervisor, only to the learner where the data arrived, \newc{(iv) the learner where the data arrived passes the true label to the learner it had chosen to classify its data, if there is such a learner.}
}
%
%

%
Each learner $i \in {\cal M}$ has access to a set of classification functions ${\cal F}_i$ which it can invoke to classify the data. 
Learner $i$ knows the functions in ${\cal F}_i$ and costs of calling them\footnote{Alternatively, we can assume that the costs are random variables with bounded support whose distribution is unknown. In this case, the learners will not learn the accuracy but they will learn accuracy minus cost.}, but not their accuracies, while it knows the set of other learners ${\cal M}_{-i}$ and costs of calling them but does not know the functions ${\cal F}_{j_i}$, $j_i \in {\cal M}_{-i}$, but only knows an upper bound on the number of classification functions that each learner has, i.e., $F_{\max}$ on $|{\cal F}_{j_i}|$\footnote{For a set $A$, let $|A|$ denote the cardinality of that set.}, $j_i \in {\cal M}_{-i}$.
Let ${\cal K}_i := {\cal F}_i \cup {\cal M}_{-i}$. We call ${\cal K}_i$ the set of arms (alternatives). We use index $k$ to denote any arm in ${\cal K}_i$, $k_i$ to denote the set classification functions of $i$, i.e., the elements of the set ${\cal F}_i$, $j_i$ to denote other learners in ${\cal M}_{-i}$. Let ${\cal F} := \cup_{j \in {\cal M}} {\cal F}_j$ denote the set of all arms of all learners. We use index $k'$ to denote an element of ${\cal F}$.

Learner $i$ can either invoke one of its classification functions or forward the data to another learner to have it labeled. We assume that for learner $i$, calling each classification function $k_i \in {\cal F}_i$ incurs a cost $d^i_{k_i}$.
For example, if the application is delay critical this can be the delay cost, or this can represent the computational cost and power consumption associated with calling a classification function.
We assume that a learner can only call a single function for each input data in order to label it. This is a reasonable assumption when the application is delay sensitive since calling more than one classification function increases the delay. 
\rmv{However, our framework can also be easily extended to ensemble learning where results of different classification function are combined to increase the prediction accuracy. We discuss this possible extension in Section \ref{sec:conc}.}
A learner $i$ can also send its data to another learner in ${\cal M}_{-i}$ in order to have it labeled. Because of the communication cost and the delay caused by processing at the recipient, we assume that whenever a data stream is sent to another learner $j_i \in {\cal M}_{-i}$ a cost of $d^i_{j_i}$ is incurred by learner $i$\footnote{The cost for learner $i$ does not depend on the cost of the classification function chosen by learner $j_i$. Since the learners are cooperative, $j_i$ will obey the rules of the proposed algorithm when choosing a classification function to label $i$'s data.}. Since the costs are bounded, without loss of generality we assume that costs are normalized, i.e., $d^i_k \in [0,1]$ for all $k \in {\cal K}_i$.
\rev{The learners are cooperative which implies that learner $j_i \in {\cal M}_{-i}$ will return a label to $i$ when called by $i$. Similarly, when called by $j_i \in {\cal M}_{-i}$, learner $i$ will return a label to $j_i$. We do not consider the effect of this on $i$'s learning rate, however, since our results hold for the case when other learners are not helping $i$ to learn about its own classification functions, they will hold when other learners help $i$ to learn about its own classification functions. 
If we assume that $d^i_{j_i}$ also captures the cost to learner $j_i$ to classify and send the label back to learner $i$, then maximizing $i$'s own utility corresponds to maximizing the system utility.} 
\rev{
%
}

\begin{figure}
\begin{center}
\includegraphics[width=0.95\columnwidth]{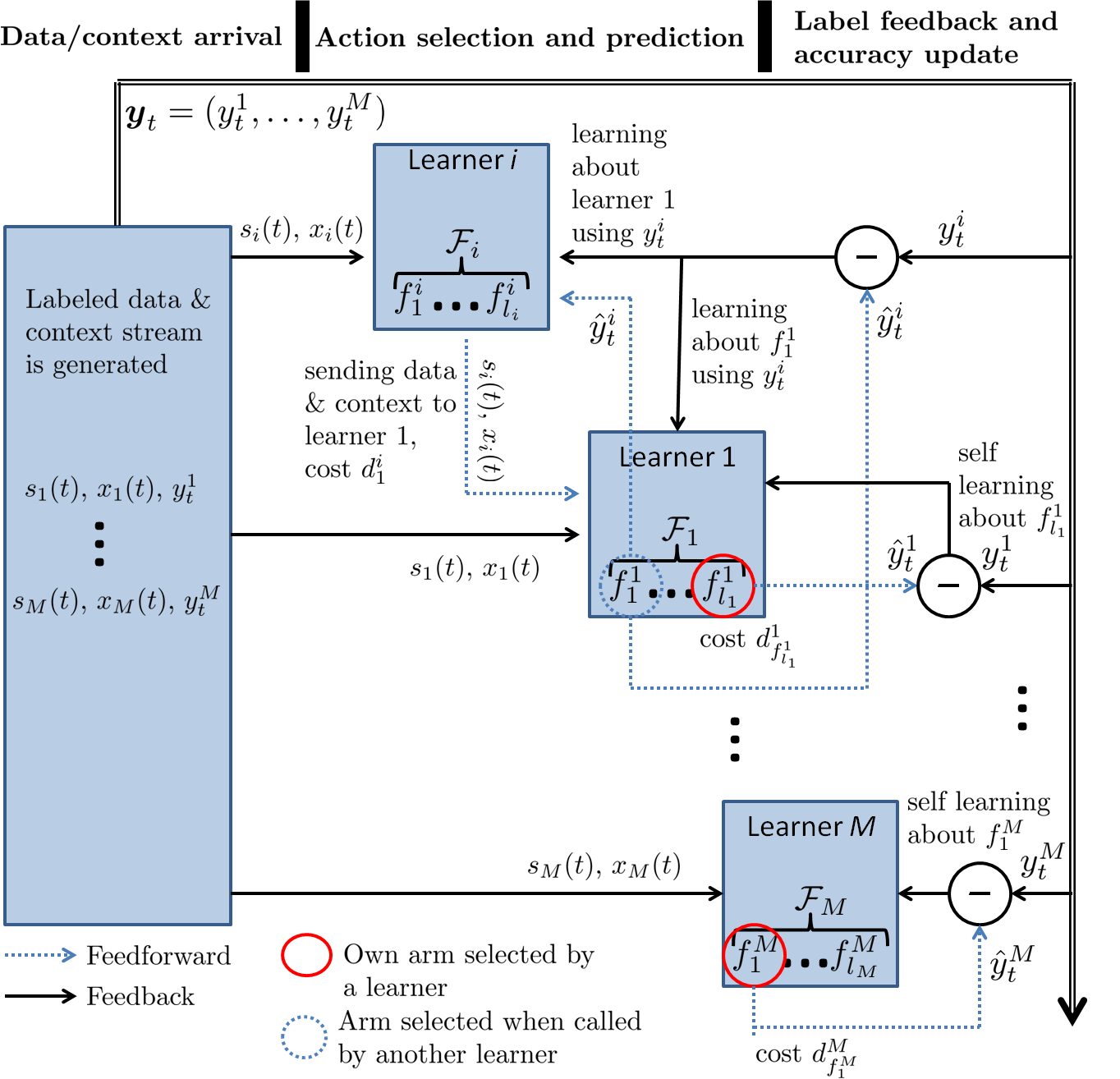}
\vspace{-0.1in}
\caption{Operation of the distributed data classification system during a time slot.} 
\label{fig:system}
\end{center}
\vspace{-0.2in}
\end{figure}

We assume that each \rev{classification function} \com{Give a concrete example of this function} produces a binary label\footnote{In general we can assume that labels belong to $\mathbb{R}$ and define the classification error as the mean squared error or some other metric. Our results can be adapted to this case as well.}. 
%
%
\newc{Considering only binary classifiers is not restrictive since in general, ensembles of binary classifiers can be used to accomplish more complex classification tasks \cite{lienhart2003detector, mao2005multiclass}.}
The data stream at time $t$ arrives to learner $i$ with context information $x_i(t)$. The context may be generated as a result of pre-classification or a header of the data stream.
%
For simplicity we assume that the context space is ${\cal X} = [0,1]^d$, while our results will hold for any bounded $d$ dimensional context space. We also note that the data input is \cem{high} dimensional and its dimension is greater than $d$ (in most of the cases its much larger than $d$) \com{Give example of the dimension of the data in a network security data}. \cem{For example, the network security data we use in numerical results section has 42 features, while the dimension of the context we use is at most 1.} In such a setting, exploiting the context information may significantly improve the classification accuracy while decreasing the classification cost. However, the rate of learning increases with the dimension of the context space, which results in a tradeoff between the rate of learning and the classification accuracy.
Exploiting the context information not only improves the classification accuracy but it can also decrease the classification cost since the context can also provide information about what features to extract from the data. 
\rmv{\cem{For example, in a network security application, the context can be the reputation of the network in which the data originates.
Based on the reputation, a learner may monitor the data stream originating from the network periodically, and this period can decrease with the reputation. This implies that the monitoring cost increases when the reputation of the network decreases.
}
}

Each classification function $k' \in {\cal F}$ has an unknown expected accuracy $\pi_{k'}(x) \in [0,1]$, depending on the context $x$. The accuracy $\pi_{k'}(x)$ represents the probability that 
an input stream with context $x$ will be labeled correctly when classification function $k'$ is used to label it. For a learner $j_i \in {\cal M}_{-i}$ its expected accuracy is equal to the expected accuracy of its best classification function, i.e., $\pi_{j_i}(x) = \max_{k_{j_i} \in {\cal F}_{j_i}} \pi_{k_{j_i}}(x)$.

\rev{Different classification functions can have different accuracies for the same context. Although we do not make any assumptions about the classification accuracy $\pi_k(x)$ and the classification cost $d^i_k$ for $k \in {\cal K}_i$, in general one can assume that classification accuracy increases with classification cost (e.g., classification functions with higher resolution, better processing). In this paper the cost $d^i_k$ is a generic term that can represent any known cost such as processing cost, delay cost, communication cost, etc. 
} 
\rev{We assume that each classification function has similar accuracies for similar contexts; we formalize this in terms of a (uniform) Lipschitz condition.}
\begin{assumption} \label{ass:lipschitz2}
For each $k' \in {\cal F}$, there exists $L>0$, $\alpha>0$ such that for all $x,x' \in {\cal X}$, we have
$|\pi_{k'}(x) - \pi_{k'}(x')| \leq L ||x-x'||^\alpha$,
where $||.||$ denotes the Euclidian norm in $\mathbb{R}^d$.
\end{assumption}

Assumption \ref{ass:lipschitz2} indicates that the accuracy of a classification function for similar contexts will be similar to each other. \rev{Even though the Lipschitz condition can hold with different constants $L_{k'}$ and $\alpha_{k'}$ for each classification function, taking $L$ to be the largest among $L_{k'}$ and $\alpha$ to be the smallest among $\alpha_{k'}$ we get the condition in Assumption \ref{ass:lipschitz2}.}
For example, the context can be the time of the day or/and the location from which the data originates.
Therefore, the relation between the classification accuracy and time can be written down as a Lipschitz condition. \newc{We assume that $\alpha$ is known by the learners, while $L$ does not need to be known. An unknown $\alpha$ can be estimated online using the sample mean estimates of accuracies for similar contexts, and our proposed algorithms can be modified to include the estimation of $\alpha$.}

%

The goal of learner $i$ is to explore the alternatives in ${\cal K}_i$ to learn the accuracies, while at the same time exploiting the best alternative for the context $x_i(t)$ arriving at each time step $t$ that balances the accuracy and cost to minimize its long term loss due to uncertainty. Learner $i$'s problem can be modeled as a contextual bandit problem \cite{slivkins2009contextual, dudik2011efficient, langford2007epoch, chu2011contextual}. After labeling the input at time $t$, each learner observes the true label and updates the sample mean accuracy of the selected arm based on this. 
Accuracies translate into rewards in bandit problems.
In the next subsection, we formally define the benchmark solution which is computed using perfect knowledge about classification accuracies. Then, we define the regret which is the performance loss due to uncertainty about classification accuracies.

\add{\vspace{-0.2in}}
\subsection{Optimal Classification with Complete Information} \label{sec:centralized}

Our benchmark when evaluating the performance of the learning algorithms is the optimal solution which selects the classification function $k'$ with the highest accuracy minus cost for learner $i$ from the set ${\cal F}$ given context $x_i(t)$ at time $t$. We assume that the costs are normalized so the tradeoff between accuracy and cost is captured without using weights. Specifically, the optimal solution we compare against is given by
\add{\vspace{-0.2in}}
\begin{align}
k_i^*(x) = \argmax_{k \in {\cal K}_i} \pi_k(x) - d^i_k, ~~ \forall x \in {\cal X}. \label{eqn:opt2}
\end{align}
%

%
%
%
%
%
%
Knowing the optimal solution means that learner $i$ knows the classification function in ${\cal F}$ that yields the \rev{highest expected accuracy} for each $x \in {\cal X}$. Choosing the best classification function for each context $x$ requires to evaluate the accuracy minus cost for each context and is computationally intractable, because the context space ${\cal X}$ has infinitely many elements. 
\rmv{
Note that the problem remains hard even if we would put additional structure on the optimal classification scheme. For instance, we could assume that the 
%
optimal classification scheme for learner $i$ partitions ${\cal X}$ into  $|\cup_{i \in {\cal M}} {\cal F}_i|$ sets in each of which a single classification function is optimal.
%
Assume for an instance that the data arrival process to learner $i$ is i.i.d. with density $q_i$ and learner $i$ has access to all classification functions $\cup_{i \in {\cal M}} {\cal F}_i$. If learner $i$ knows $q_i$ and the classification accuracies of all classification functions in $\cup_{i \in {\cal M}} {\cal F}_i$, then 
under the above assumption, learner $i$ could compute the optimal classification regions
\begin{align*}
{\cal R}^* = \{R^*_k\}_{(k \in \cup_{i \in {\cal M}} {\cal F}_i)},
\end{align*}
of ${\cal X}$, by solving
\begin{align}
\textrm{{\bf (P1)  }} {\cal R}^* = \argmin_{{\cal R} \in \Theta} E \left[ \left| Y - \sum_{k \in \cup_{i \in {\cal M}} {\cal F}_i} k(X) I(X \in R_l) \right| + \sum_{k \in {\cal F}_i} d_k I(X \in R_k) + \sum_{k \in {\cal M}_{-i}}  d_k I \left(X \in \cup_{j \in {\cal F}_k} R_j \right)   \right], \label{eqn:opt}
\end{align}
where $\Theta$ is the set of $|\cup_{i \in {\cal M}} {\cal F}_i|$-set partitions of ${\cal X}$, the expectation is taken with respect to the distribution $q_i$ and $Y$ is the random variable denoting the true label. Here $I(X \in R_l)$ denotes the event that data received by the learner belongs to the $l$-th set of the partition ${\cal R}$ of ${\cal X}$.
The complexity of finding the optimal classification regions increases exponentially with $|\cup_{i \in {\cal M}} {\cal F}_i|$.
Importantly, note that the learning problem we are trying to solve is even harder than this because the learners are distributed and thus, each learner cannot directly access to all classification functions (learner $i$ only knows set ${\cal F}_i$ and ${\cal M}$, but it does not know any ${\cal F}_j, j \in {\cal M}_{-i}$), and the distributions $q_i$, $i \in {\cal M}$ are unknown (they need not to be i.i.d. or Markovian). Therefore, we use online learning techniques that do not rely on solving the optimization problem in (\ref{eqn:opt}) nor on an estimated version of it. 

\cem{Note that we discussed the above assumption only to illustrate that the optimal solution is still computationally
hard even when we put extra structure on the problem. Our results  
only require Assumption \ref{ass:lipschitz2} to hold.}
}
 
%
%
\vspace{-0.1in}
\subsection{The Regret of Learning}

In this subsection we define the regret as a performance measure of the learning algorithm used by the learners. Simply, the regret is the loss incurred due to the unknown system dynamics. Regret of a learning algorithm $\alpha$ which selects an arm $\alpha_t(x_i(t))$ at time $t$ for learner $i$ is defined with respect to the best arm $k_i^*(x)$ given in (\ref{eqn:opt2}).
The regret of a learning algorithm for learner $i$ is given by
\add{\vspace{-0.05in}}
\begin{align*}
R_i(T) &:= \sum_{t=1}^T \left( \pi_{k_i^*(x_i(t))}(x_i(t)) - d^i_{k_i^*(x_i(t))} \right) \\ 
&- E \left[ \sum_{t=1}^T ( I(\hat{y}^i_t(\alpha_t(x_i(t))) = y^i_t) - d^i_{\alpha_t(x_i(t))}) \right] ,
\end{align*}
where $\hat{y}^i_t(.)$ denotes the prediction of the arm selected by learner $i$ at time $t$, $y^i_t$ denotes the true label of the data stream that arrived to learner $i$ in time slot $t$, \newc{and the expectation is taken with respect to the randomness of the prediction.} Regret gives the convergence rate of the total expected reward of the learning algorithm to the value of the optimal solution given in (\ref{eqn:opt2}). Any algorithm whose regret is sublinear, i.e., $R_i(T) = O(T^\gamma)$ such that $\gamma<1$, will converge to the optimal solution in terms of the average reward.

In the next section, we propose two online learning algorithms which achieves sublinear regret for the distributed classification problem. \newc{Detailed analysis of these algorithms is given in \cite{cem2013deccontext}. In this paper we only briefly mention these algorithms and focus instead on the specific challenges involved in applying these algorithms to Big Data mining.}



\vspace{-0.15in}
\section{Distributed Online Learning Algorithms for Big Data Mining} \label{sec:iid}

In this section we propose two online learning algorithms for Big Data mining. The first algorithm is {\em Classify or Send for classification} (CoS) whose pseudocode is given in Fig. \ref{fig:COS}. Basically, CoS forms a uniform partition ${\cal P}_T$ of the context space consisting of $(m_T)^d$, $d$-dimensional hypercubes, where the $l$th hypercube is denoted by $P_l$, and $m_T$ is called the slicing parameter which depends on final time $T$. Each of these hypercubes are treated as separate bandit problems where the goal for each problem is to learn the arm in ${\cal K}_i$ that yields the highest accuracy minus cost. Different from the single-agent contextual bandits, since the context arrivals to different learners are different, a training phase in addition to exploration and exploitation phases are required to learn the accuracies of the other learners correctly. \newc{In order to decide when to train, explore or exploit, CoS keeps three control functions $D_1(t)$, $D_2(t)$ and $D_3(t)$, and two different sets of counters $N^i_{k,l}(t)$ for $k \in {\cal K}_i$, $N^i_{1,k,l}(t)$ for $k \in {\cal M}_{-i}$ for all $P_l \in {\cal P}_T$. Let
\begin{align*}
{\cal S}_{i,l}(t) := &\left\{ k_i \in {\cal F}_i \textrm{ such that } N^i_{k_i,l}(t) \leq D_1(t)  \textrm{ or } j_i \in {\cal M}_{-i} \right. \\
&\left. \textrm{ such that } N^i_{1,j_i,l}(t) \leq D_2(t) \textrm{ or } N^i_{j_i,l}(t) \leq D_3(t)   \right\}.
\end{align*} 
At time slot $t$ if $x_i(t) \in P_l$ and ${\cal S}_{i,l}(t) = \emptyset$, then CoS exploits by choosing the arm in ${\cal K}_i$ with the highest empirical reward $\bar{r}^i_{k,l}(t)$ (i.e, sample mean accuracy minus the cost) for set $P_l$. Otherwise it either trains a learner in ${\cal M}_{-i}$ or explores an arm in ${\cal K}_i$. Due to the high uncertainty about the rewards collected during the training phases, they are not used to calculate the empirical reward. 
}
The pseudocodes for each phase is given in Fig. \ref{fig:mtrain}. 

\begin{figure}[htb]
\fbox {
\begin{minipage}{0.95\columnwidth}
{\fontsize{8}{7}\selectfont
\flushleft{Classify or Send for Classification (CoS for learner $i$):}
\begin{algorithmic}[1]
\STATE{Input: $D_1(t)$, $D_2(t)$, $D_3(t)$, $T$, $m_T$}
\STATE{Initialize: Partition $[0,1]^d$ into $(m_T)^d$ sets. Let ${\cal P}_T = \{ P_1, \ldots, P_{(m_T)^d} \}$ denote the sets in this partition. $N^i_{k,l}=0, \forall k \in {\cal K}_i, P_l \in {\cal P}_T$, $N^i_{1,k,l}=0, \forall k \in {\cal M}_{-i}, P_l \in {\cal P}_T$.}
\WHILE{$t \geq 1$}
\FOR{$l=1,\ldots,(m_T)^d$}
\IF{$x_i(t) \in P_l$}
\IF{$\exists k \in {\cal F}_i \textrm{ such that } N^i_{k,l} \leq D_1(t)$}
\STATE{Run {\bf Explore}($k$, $N^i_{k,l}$, $\bar{r}_{k,l}$)}
\ELSIF{$\exists k \in {\cal M}_{-i} \textrm{ such that } N^i_{1,k,l} \leq D_2(t)$}
\STATE{Obtain $N^k_l(t)$ from $k$, set $N^i_{1,k,l} = N^k_l(t) - N^i_{k,l}$}
\IF{$N^i_{1,k,l} \leq D_2(t)$}
\STATE{Run {\bf Train}($k$, $N^i_{1,k,l}$)}
\ELSE
\STATE{Go to line 15}
\ENDIF
\ELSIF{$\exists k \in {\cal M}_{-i} \textrm{ such that } N^i_{k,l} \leq D_3(t)$}
\STATE{Run {\bf Explore}($k$, $N^i_{k,l}$, $\bar{r}_{k,l}$)}
\ELSE
\STATE{Run {\bf Exploit}($\boldsymbol{M}^i_l$, $\bar{\boldsymbol{r}}_l$, ${\cal K}_i$)}
\ENDIF
\ENDIF
\ENDFOR
\STATE{$t=t+1$}
\ENDWHILE
\end{algorithmic}
}
\end{minipage}
} \caption{Pseudocode for the CoS algorithm.} \label{fig:COS}
\add{\vspace{-0.12in}}
\end{figure}
\begin{figure}[htb]
\fbox {
\begin{minipage}{0.95\columnwidth}
{\fontsize{8}{7}\selectfont
{\bf Train}($k$, $n$):
\begin{algorithmic}[1]
\STATE{Select arm $k$, receive prediction $\hat{y}(k)$. Receive reward $r_k(t) = I(\hat{y}(k) = y_t) - d^i_{k}$. $n++$. }
\end{algorithmic}
{\bf Explore}($k$, $n$, $r$):
\begin{algorithmic}[1]
\STATE{Select arm $k$, receive prediction $\hat{y}(k)$. Receive reward $r_k(t) = I(\hat{y}(k) = y_t) - d^i_{k}$. $r = \frac{n r + r_k(t)}{n + 1}$. $n++$.   }
\end{algorithmic}
{\bf Exploit}($\boldsymbol{n}$, $\boldsymbol{r}$, ${\cal K}_i$):
\begin{algorithmic}[1]
\STATE{Select arm $k\in \argmax_{j \in {\cal K}_i} r_j$, receive prediction $\hat{y}(k)$. Receive reward $r_k(t) = I(\hat{y}(k) = y_t) - d^i_{k}$. $\bar{r}_{k} = \frac{n_k \bar{r}_{k} + r_k(t)}{n_k + 1}$. $n_k++$. }
\end{algorithmic}
}
\end{minipage}
} \caption{Pseudocode of the training, exploration and exploitation modules.} \label{fig:mtrain}
\add{\vspace{-0.2in}}
\end{figure}

\comment{
\begin{figure}[htb]
\fbox {
\begin{minipage}{0.95\columnwidth}
{\fontsize{9}{9}\selectfont
{\bf Explore}($k$, $n$, $r$):
\begin{algorithmic}[1]
\STATE{select arm $k$}
\STATE{Receive reward $r_k(t) = I(k(x_i(t)) = y_t) - d_{k(x_i(t))}$}
\STATE{$r = \frac{n r + r_k(t)}{n + 1}$}
\STATE{$n++$}
\end{algorithmic}
}
\end{minipage}
} \caption{Pseudocode of the exploration module} \label{fig:mexplore}
\end{figure}
}

\comment{
\begin{figure}[htb]
\fbox {
\begin{minipage}{0.9\columnwidth}
{\fontsize{9}{9}\selectfont
{\bf Exploit}($\boldsymbol{n}$, $\boldsymbol{r}$, ${\cal K}_i$):
\begin{algorithmic}[1]
\STATE{select arm $k\in \argmax_{j \in {\cal K}_i} r_j$}
\STATE{Receive reward $r_k(t) = I(k(x_i(t)) = y_t) - d_{k(x_i(t))}$}
\STATE{$\bar{r}_{k} = \frac{n_k \bar{r}_{k} + r_k(t)}{n_k + 1}$}
\STATE{$n_k++$}
\end{algorithmic}
}
\end{minipage}
} \caption{Pseudocode of the exploitation module} \label{fig:mexploit}
\end{figure}
}

Our second algorithm is the {\em distributed context zooming} algorithm (DCZA) whose pseudocode is given in Fig. \ref{fig:DDZA}. The difference of DCZA from CoS is that instead of starting with a uniform partition of the context space, it adaptively creates partition of the context space based on the context arrival process. \newc{It does this by splitting a level $l$ hypercube in the partition of the context space into $2^d$ level $l+1$ hypercubes with equal sizes, when the number of context arrivals to the level $l$ hypercube exceeds $A 2^{pl}$ for constants $A,p>0$.}

We provide a detailed discussion of the operation of these algorithms and comparison of them in terms of their performance and computational requirements under different context arrival processes in \cite{cem2013deccontext}. All the theorems we derived for CoS (CLUP in \cite{cem2013deccontext}) and DCZA also holds for this paper as well. Due to limited space, we do not rewrite these theorems here. Our focus in this paper is to consider different aspects of the application of these algorithms to Big Data mining, and provide analytical and numerical results for them.
\vspace{-0.1in}
\begin{figure}[htb]
\fbox {
\begin{minipage}{0.95\columnwidth}
{\fontsize{8}{7}\selectfont
\flushleft{Distributed Context Zooming Algorithm (DCZA for learner $i$):}
\begin{algorithmic}[1]
\STATE{Input: $D_1(t)$, $D_2(t)$, $D_3(t)$, $p$, $A$}
\STATE{Initialization: ${\cal A} = \{[0,1]^d\}$, Run {\bf Initialize}(${\cal A}$)}
\STATE{Notation: $\boldsymbol{M}^i_C := (N^i_{k,c})_{k \in {\cal K}_i}$, 
$\bar{\boldsymbol{r}}_C := (\bar{r}_{k,C})_{k \in {\cal K}_i}$, $l_C$: level of hypercube $C$.}
\WHILE{$t \geq 1$}
\FOR{$C \in {\cal A}$}
\IF{$x_i(t) \in C$}
\IF{$\exists k \in {\cal F}_i \textrm{ such that } N^i_{k,C} \leq D_1(t)$}
\STATE{Run {\bf Explore}($k$, $N^i_{k,C}$, $\bar{r}_{k,C}$)}
\ELSIF{$\exists k \in {\cal M}_{-i} \textrm{ such that } N^i_{1,k,C} \leq D_2(t)$}
\STATE{Obtain $N^k_C(t)$ from $k$}
\IF{$N^k_C(t)=0$}
\STATE{ask $k$ to create hypercube $C$, set $N^i_{1,k,C}=0$}
\ELSE
\STATE{set $N^i_{1,k,C} = N^k_C(t) - N^i_{k,C}$}
\ENDIF
\IF{$N^i_{1,k,C} \leq D_2(t)$}
\STATE{Run {\bf Train}($k$, $N^i_{1,k,C}$)}
\ELSE
\STATE{Go to line 21}
\ENDIF
\ELSIF{$\exists k \in {\cal M}_{-i} \textrm{ such that } N^i_{k,C} \leq D_3(t)$}
\STATE{Run {\bf Explore}($k$, $N^i_{k,C}$, $\bar{r}_{k,C}$)}
\ELSE
\STATE{Run {\bf Exploit}($\boldsymbol{M}^i_C$, $\bar{\boldsymbol{r}}_C$, ${\cal K}_i$)}
\ENDIF
\ENDIF
\STATE{$N^i_C = N^i_C +1$}
\IF{$N^i_C \geq A 2^{pl_C}$}
\STATE{Create $2^d$ level $l_C+1$ child hypercubes denoted by ${\cal A}^{l_C+1}_C$}
\STATE{Run {\bf Initialize}(${\cal A}^{l_C+1}_C$)}
\STATE{${\cal A} = {\cal A} \cup {\cal A}^{l_C+1}_C - C$}
\ENDIF
\ENDFOR
\STATE{$t=t+1$}
\ENDWHILE
\end{algorithmic}
}
\end{minipage}
} 
\fbox {
\begin{minipage}{0.95\columnwidth}
{\fontsize{8}{7}\selectfont
{\bf Initialize}(${\cal B}$):
\begin{algorithmic}[1]
\FOR{$C \in {\cal B}$}
\STATE{Set $N^i_C = 0$, $N^i_{k,C}=0$, $\bar{r}_{k,C}=0$ for $C \in {\cal A}, k \in {\cal K}_i$, $N^i_{1,k,C}=0$ for $k \in {\cal M}_{-i}$}
\ENDFOR
\end{algorithmic}
}
\end{minipage}
}
\caption{Pseudocode of the DCZA algorithm and its initialization module.} \label{fig:DDZA}
\vspace{-0.1in}
\end{figure}

\remove{
\begin{align*}
R(T) = E[R_e(T)] + E[R_s(T)] + E[R_n(T)],
\end{align*}
}
\remove{
\begin{proof}
Let $\Omega$ denote the space of all possible outcomes, and $w$ be a sample path. The event that the algorithm exploits at time $t$ is given by
\begin{align*}
{\cal W}^i_{l}(t) := \{ w : S_{i,l}(t) = \emptyset  \}.
\end{align*}
We will bound the probability that the algorithm chooses a suboptimal arm in an exploitation step. Using that we can bound the expected number of times a suboptimal arm is chosen by the algorithm. Note that every time a suboptimal arm is chosen, since $\pi_k(x) - d_k \in [-1,1]$, the loss is at most $2$. Therefore $2$ times the expected number of times a suboptimal arm is chosen in an exploitation step bounds the regret due to suboptimal choices in exploitation steps.
Let ${\cal V}^i_{k,l}(t)$ be the event that a suboptimal action $k$ is chosen at time $t$. We have
\begin{align*}
R_s(T) &\leq \sum_{l \in {\cal P}_T} \sum_{t=1}^T \sum_{k \in {\cal L}^i_\theta(t)} I({\cal V}^i_{k,l}(t), {\cal W}^i_{l}(t) ).
\end{align*}
Taking the expectation
\begin{align}
E[R_s(T)] \leq \sum_{l \in {\cal P}_T} \sum_{t=1}^T \sum_{k \in {\cal L}^i_\theta(t)} P({\cal V}^i_{k,l}(t), {\cal W}^i_{l}(t) ) \label{eqn:subregret}
\end{align}

Let ${\cal B}^i_{k,l}(t)$ be the event that at most $t^{\phi}$ samples in ${\cal E}^i_{k,l}(t)$ are collected from suboptimal classification functions of the $k$-th arm. Obviously for any $k \in {\cal F}_i$, ${\cal B}^i_{k,l}(t) = \Omega$, while this is not always true for $k \in {\cal M}_{-i}$. 
We have
\begin{align}
\{ {\cal V}^i_{k,l}(t), {\cal W}^i_{l}(t)\} &\subset \left\{ \bar{r}_{k,l}(t) \geq \bar{r}_{k^*(l),l}(t), {\cal W}^i_{l}(t), {\cal B}^i_{k,l}(t) \right\} 
\cup \left\{ \bar{r}_{k,l}(t) \geq \bar{r}_{k^*(l),l}(t), {\cal W}^i_{l}(t), {\cal B}^i_{k,l}(t)^c \right\} \notag \\
&\subset \left\{ \bar{r}_{k,l}(t) \geq \overline{\mu}_{k,l} + H_t, {\cal W}^i_{l}(t), {\cal B}^i_{k,l}(t)  \right\} 
\cup \left\{ \bar{r}_{k^*(l),l}(t) \leq \underline{\mu}_{k^*(l),l} - H_t, {\cal W}^i_{l}(t), {\cal B}^i_{k,l}(t)  \right\} \notag \\
& \cup \left\{ \bar{r}_{k,l}(t) \geq \bar{r}_{k^*(l),l}(t), 
\bar{r}_{k,l}(t) < \overline{\mu}_{k,l} + H_t,
 \bar{r}_{k^*(l),l}(t) > \underline{\mu}_{k^*(l),l} - H_t,
{\cal W}^i_{l}(t) ,{\cal B}^i_{k,l}(t)  \right\} \notag \\
&\cup {\cal B}^i_{k,l}(t)^c , \label{eqn:vkt}
\end{align}
for some $H_t >0$. This implies that 
\begin{align}
P \left( {\cal V}^i_{k,l}(t), {\cal W}^i_{l}(t) \right) 
&\leq P \left( \bar{r}_{k,l}(t) \geq \overline{\mu}_{k,l} + H_t, {\cal W}^i_{l}(t), {\cal B}^i_{k,l}(t)  \right) \notag  \\
&+ P \left( \bar{r}_{k^*(l),l}(t) \leq \underline{\mu}_{k^*(l),l} - H_t, {\cal W}^i_{l}(t), {\cal B}^i_{k,l}(t) \right) \notag  \\
&+ P \left( \bar{r}_{k,l}(t) \geq \bar{r}_{k^*(l),l}(t), 
\bar{r}_{k,l}(t) < \overline{\mu}_{k,l} + H_t,
 \bar{r}_{k^*(l),l}(t) > \underline{\mu}_{k^*(l),l} - H_t,
{\cal W}^i_{l}(t), {\cal B}^i_{k,l}(t)  \right) \notag \\
&+ P({\cal B}^i_{k,l}(t)^c). \label{eqn:ubound1}
\end{align}
We have for any suboptimal arm $k \in {\cal K}_i$
\begin{align}
& P \left( \bar{r}_{k,l}(t) \geq \bar{r}_{k^*(l),l}(t), 
\bar{r}_{k,l}(t) < \overline{\mu}_{k,l} + H_t,
 \bar{r}_{k^*(l),l}(t) > \underline{\mu}_{k^*(l),l} - H_t,
{\cal W}^i_{l}(t), {\cal B}^i_{k,l}(t)  \right) \notag \\
&\leq P \left( \bar{r}^{\textrm{best}}_{k,l}(|{\cal E}^i_{k,l}(t)|) 
\geq \bar{r}^{\textrm{worst}}_{k^*(l),l}(|{\cal E}^i_{k^*(l),l}(t)|)
-  t^{\phi-1} , 
\bar{r}^{\textrm{best}}_{k,l}(|{\cal E}^i_{k,l}(t)|) < \overline{\mu}_{k,l} + L \left( \frac{\sqrt{d}}{m_T} \right)^\alpha + H_t +  t^{\phi-1}, \right. \notag \\
& \left. \bar{r}^{\textrm{worst}}_{k^*(l),l}(|{\cal E}^i_{k^*(l),l}(t)|) > \underline{\mu}_{k^*(l),l} - L \left( \frac{\sqrt{d}}{m_T} \right)^\alpha - H_t,
{\cal W}^i_{l}(t)    \right). \notag
\end{align}
Since $k$ is a suboptimal arm, when
\begin{align}
2 L \left( \frac{\sqrt{d}}{m_T} \right)^\alpha + 2H_t + 2t^{\phi-1} - a_1 t^\theta \leq 0,
\label{eqn:boundcond}
\end{align}
the three inequalities given below
\begin{align*}
& \underline{\mu}_{k^*(l),l} - \overline{\mu}_{k,l} > a_1 t^{\theta},\\
& \bar{r}^{\textrm{best}}_{k,l}(|{\cal E}^i_{k,l}(t)|) < \overline{\mu}_{k,l} + L \left( \frac{ \sqrt{d}}{m_T} \right)^\alpha + H_t + t^{\phi-1} ,\\
& \bar{r}^{\textrm{worst}}_{k^*(l),l}(|{\cal E}^i_{k,l}(t)|) > \underline{\mu}_{k^*(l),l} - L \left( \frac{ \sqrt{d}}{m_T} \right)^\alpha - H_t,
\end{align*}
together imply that 
\begin{align*}
\bar{r}^{\textrm{best}}_{k,l}(|{\cal E}^i_{k,l}(t)|) < \bar{r}^{\textrm{worst}}_{k^*(l),l}(|{\cal E}^i_{k,l}(t)|) -  t^{\phi-1} ,
\end{align*}
which implies that for a suboptimal arm $k \in {\cal K}_i$, we have
\begin{align}
P \left( \bar{r}_{k,l}(t) \geq \bar{r}_{k^*(l),l}(t), 
\bar{r}_{k,l}(t) < \overline{\mu}_{k,l} + H_t,
 \bar{r}_{k^*(l),l}(t) > \underline{\mu}_{k^*(l),l} - H_t,
{\cal W}^i_{l}(t), {\cal B}^i_{k,l}(t)  \right) = 0. \label{eqn:vktbound1}
\end{align}
Let $H_t = 2 t^{\phi-1}$. Then a sufficient condition that implies (\ref{eqn:boundcond}) is
\begin{align}
&2 L( \sqrt{d})^\alpha t^{- \gamma \alpha} + 6 t^{\phi-1} \leq a_1 t^\theta. \label{eqn:maincondition}
\end{align}
Assume that (\ref{eqn:maincondition}) holds for all $t \geq 1$.
Using a Chernoff-Hoeffding bound, for any $k \in {\cal L}^i_{\theta}(t)$, since on the event ${\cal W}^i_{l}(t)$, $|{\cal E}^i_{k,l}(t)| \geq t^z \log t$, we have
\begin{align}
P \left( \bar{r}_{k,l}(t) \geq \overline{\mu}_{k,l} + H_t, {\cal W}^i_{l}(t), {\cal B}^i_{k,l}(t) \right) 
&\leq P \left( \bar{r}^{\textrm{best}}_{k,l}(|{\cal E}^i_{k,l}(t)|) \geq \overline{\mu}_{k,l} + H_t, {\cal W}^i_{l}(t) \right) \notag \\
&\leq e^{-2 (H_t)^2 t^z \log t}  = e^{-8 t^{2\phi-2} t^z \log t} ~, \label{eqn:vktbound2}
\end{align}
and
\begin{align}
&P \left( \bar{r}_{k^*(l),l}(t) \leq \underline{\mu}_{k^*(l),l} - H_t, {\cal W}^i_{l}(t), {\cal B}^i_{k,l}(t) \right) \notag \\
&\leq P \left( \bar{r}^{\textrm{worst}}_{k^*(l),l}(|{\cal E}^i_{k^*(l),l}(t)|)  \leq \underline{\mu}_{k^*(l),l} - H_t +  t^{\phi-1}, {\cal W}^i_{l}(t) \right) \notag \\
&\leq e^{-2 (H_t -  t^{\phi-1})^2 t^z \log t} = e^{-2 t^{2\phi-2} t^z \log t}. \label{eqn:vktbound3}
\end{align}
In order to bound the regret, we will sum (\ref{eqn:vktbound2}) and (\ref{eqn:vktbound3}) for all $t$ up to $T$. For regret to be small we want the sum to be sublinear in $T$. This holds when $2\phi -2 +z \geq 0$. We want $z$ to be small since regret due to explorations increases with $z$, and we also want $\phi$ to be small since we will show that our regret bound increases with $\phi$. Therefore we set $2\phi -2 +z =0$, hence 
\begin{align}
\phi = 1-z/2. \label{eqn:maincondition2}
\end{align}
When (\ref{eqn:maincondition2}) holds we have
\begin{align}
P \left( \bar{r}_{k,l}(t) \geq \overline{\mu}_{k,l} + H_t, {\cal W}^i_{l}(t), {\cal B}^i_{k,l}(t) \right) \leq \frac{1}{t^2}, \label{eqn:vktbound22}
\end{align}
and
\begin{align}
P \left( \bar{r}_{k^*(l),l}(t) \leq \underline{\mu}_{k^*(l),l} - H_t, {\cal W}^i_{l}(t), {\cal B}^i_{k,l}(t) \right) \leq \frac{1}{t^2}. \label{eqn:vktbound32}
\end{align}

Finally, for $k \in {\cal F}_i$ obviously we have $P({\cal B}^i_{k,l}(t)^c)=0$. For $k \in {\cal M}_{-i}$, let $X^i_{k,l}(t)$ denote the random variable which is the number of times a suboptimal classification function for arm $k$ is chosen in exploitation steps when the context is in set $P_l$ by time $t$. We have $\{ {\cal B}^i_{k,l}(t)^c, {\cal W}^i_l(t)  \} = \{ X^i_{k,l}(t) \geq t^\phi \}$. Applying the Markov inequality we have
\begin{align*}
P({\cal B}^i_{k,l}(t)^c, {\cal W}^i_l(t)) \leq \frac{E[X^i_{k,l}(t)]}{t^\phi},
\end{align*}
Let $\Xi^i_{k,l}(t)$ be the event that a suboptimal classification function $m \in {\cal F}_k$ is called by learner $k \in {\cal M}_{-i}$, when it is invoked by learner $i$ for the $t$-th time in the exploitation phase of learner $i$. 
We have 
\begin{align*}
X^i_{k,l}(t) = \sum_{t'=1}^{{\cal E}^i_{k,l}(t)} I(\Xi^i_{k,l}(t')),
\end{align*}
and
\begin{align*}
P \left( \Xi^i_{k,l}(t) \right) 
&\leq \sum_{m \in {\cal L}^k_\theta} P \left( \bar{r}_{m,l}(t) \geq \bar{r}^{*k}_{l}(t) \right) \\
&\leq \sum_{m \in {\cal L}^k_\theta}
\left(  P \left( \bar{r}_{m,l}(t) \geq \overline{\mu}_{m,l} + H_t, {\cal W}^i_{l}(t) \right)   
+ P \left( \bar{r}^{*k}_{l}(t) \leq \underline{\mu}^{*k}_{l} - H_t, {\cal W}^i_{l}(t) \right) \right. \\
&\left. + P \left( \bar{r}_{m,l}(t) \geq \bar{r}^{*k}_{l}(t), 
\bar{r}_{m,l}(t) < \overline{\mu}_{m,l} + H_t,
 \bar{r}^{*k}_{l}(t) > \underline{\mu}^{*k}_{l} - H_t ,
{\cal W}^i_{l}(t) \right)  \right).
\end{align*}
When (\ref{eqn:maincondition}) holds, since $\phi = 1 - z/2$, the last probability in the sum above is equal to zero while the first two inequalities are upper bounded by $e^{-2(H_t)^2 t^z \log t}$. This is due to the second phase of the exploration algorithm which requires at least $t^z \log t$ samples from the second exploration phase for all learners before the algorithm exploits any learner. Therefore, we have
\begin{align*}
P \left( \Xi^i_{k,l}(t) \right) \leq \sum_{m \in {\cal L}^k_\theta} 2 e^{-2(H_t)^2 t^z \log t} \leq \frac{2 |{\cal F}_k|}{t^2}.
\end{align*}
These together imply that 
\begin{align*}
E[X^i_{k,l}(t)] \leq \sum_{t'=1}^{\infty} P(\Xi^i_{k,l}(t')) \leq 2 |{\cal F}_k| \sum_{t'=1}^\infty \frac{1}{t^2}.
\end{align*}
Therefore from the Markov inequality we get
\begin{align}
P({\cal B}^i_{k,l}(t)^c, {\cal W}^i_l(t)) = P(X^i_{k,l}(t) \geq t^\phi) \leq \frac{2 |{\cal F}_k| \beta_2}{t^{1-z/2}}. \label{eqn:selectionbound}
\end{align}
Then using (\ref{eqn:vktbound1}), (\ref{eqn:vktbound22}), (\ref{eqn:vktbound32}) and (\ref{eqn:selectionbound}) we have 
\begin{align*}
P \left( {\cal V}^i_{k,l}(t), {\cal W}^i_l(t)  \right) \leq \frac{2}{t^{2}} + \frac{2 |{\cal F}_k| \beta_2}{t^{1-z/2}},
\end{align*}
for any $k \in {\cal M}_{-i}$, and
\begin{align*}
P \left( {\cal V}^i_{k,l}(t), {\cal W}^i_l(t)  \right) \leq \frac{2}{t^{2}},
\end{align*}
for any $k \in {\cal F}_i$. By (\ref{eqn:subregret}), we have
\begin{align}
E[R_s(T)] &\leq 2^d T^{\gamma d}
 \left( 2 (M-1+|{\cal F}_i|) \beta_2  + 2 (M-1) F_{\max} \beta_2 \sum_{t=1}^T \frac{1}{t^{1-z/2}} \right) \\
&\leq  2^{d+1} (M-1+|{\cal F}_i|) \beta_2 T^{\gamma d}+ \frac{2^{d+2} (M-1)  F_{\max} \beta_2}{z} T^{\gamma d + z/2}, \label{eqn:regret_s}
\end{align}
where (\ref{eqn:regret_s}) follows from Appendix \ref{app:seriesbound}. 
\end{proof}
}

%
%
\remove{
\begin{proof}
Let $\Xi^i_{k,l}(t)$ be the event that a suboptimal classification function $m \in {\cal F}_k$ is called by learner $k \in {\cal M}_{-i}$, when it is invoked by learner $i$ for the $t$-th time in the exploitation phase of learner $i$. 
We have 
\begin{align*}
X^i_{k,l}(t) = \sum_{t'=1}^{{\cal E}^i_{k,l}(t)} I(\Xi^i_{k,l}(t')),
\end{align*}
and
\begin{align*}
P \left( \Xi^i_{k,l}(t) \right) 
&\leq \sum_{m \in {\cal L}^k_\theta} P \left( \bar{r}_{m,l}(t) \geq \bar{r}^{*k}_{l}(t) \right) \\
&\leq \sum_{m \in {\cal L}^k_\theta}
\left(  P \left( \bar{r}_{m,l}(t) \geq \overline{\mu}_{m,l} + H_t, {\cal W}^i_{l}(t) \right)   
+ P \left( \bar{r}^{*k}_{l}(t) \leq \underline{\mu}^{*k}_{l} - H_t, {\cal W}^i_{l}(t) \right) \right. \\
&\left. + P \left( \bar{r}_{m,l}(t) \geq \bar{r}^{*k}_{l}(t), 
\bar{r}_{m,l}(t) < \overline{\mu}_{m,l} + H_t,
 \bar{r}^{*k}_{l}(t) > \underline{\mu}^{*k}_{l} - H_t ,
{\cal W}^i_{l}(t) \right)  \right).
\end{align*}
Let $H_t = 2 t^{-z/2}$. Similar to the proof of Lemma \ref{lemma:suboptimal1}, the last probability in the sum above is equal to zero while the first two inequalities are upper bounded by $e^{-2(H_t)^2 t^z \log t}$. This is due to the second phase of the exploration algorithm which requires at least $t^z \log t$ samples from the second exploration phase for all learners before the algorithm exploits any learner. Therefore, we have
\begin{align*}
P \left( \Xi^i_{k,l}(t) \right) \leq \sum_{m \in {\cal L}^k_\theta} 2 e^{-2(H_t)^2 t^z \log t} \leq \frac{2 |{\cal F}_k|}{t^2}.
\end{align*}
These together imply that 
\begin{align*}
E[X^i_{k,l}(t)] \leq \sum_{t'=1}^{\infty} P(\Xi^i_{k,l}(t')) \leq 2 |{\cal F}_k| \sum_{t'=1}^\infty \frac{1}{t^2}.
\end{align*}
\end{proof}
}
\remove{
\begin{proof}
If a near optimal arm in ${\cal F}_i$ is chosen at time $t$, the contribution to the regret is at most $a_1 t^{\theta}$. If a near optimal arm in $k \in {\cal M}_{-i}$ is chosen at time $t$, and if $k$ classifies according to one of its near optimal classification functions than the contribution to the regret is at most $2 a_1 t^{\theta}$.
Therefore the total regret due to near optimal arm selections in ${\cal K}_i$ by time $T$ is upper bounded by 
\begin{align*}
2 a_1 \sum_{t=1}^T t^{\theta} & \leq \frac{2 a_1 T^{1+\theta}}{1+\theta},
\end{align*}
by using the result in Appendix \ref{app:seriesbound}. 
Each time a near optimal arm in $k \in {\cal M}_{-i}$ is chosen in an exploitation step, there is a small probability that the classification function called by arm $k$ is a suboptimal one. Given in Lemma \ref{lemma:callother}, the expected number of times a suboptimal classification function is called is bounded by $2 |{\cal F}_k| \beta_2$. Each time a suboptimal classification function is chosen the regret can be at most $2$.
\end{proof}
}
%


%
\comment{
\begin{theorem}\label{theorem:cos}
Let the CoS algorithm run with exploration control functions $D_1(t) = t^{2\alpha/(3\alpha+d)} \log t$, $D_2(t) = F_{\max} t^{2\alpha/(3\alpha+d)} \log t$, $D_3(t) = t^{2\alpha/(3\alpha+d)} \log t$ and slicing parameter $m_T = T^{1/(3\alpha + d)}$. Then,
\begin{align*}
E[R(T)] &\leq T^{\frac{2\alpha+d}{3\alpha+d}}
\left( \frac{2 (2 L d^{\alpha/2}+6)}{\frac{2\alpha+d}{3\alpha+d}} + 2^d Z_i \log T \right) \\
&+ T^{\frac{\alpha+d}{3\alpha+d}} \frac{2^{d+2} (M-1) F_{\max} \beta_2}{\frac{2\alpha}{3\alpha+d}} \\
&+ T^{\frac{d}{3\alpha+d}} 2^d (2 Z_i \beta_2 
+ |{\cal K}_i|) + 4 (M-1) F_{\max} \beta_2,
\end{align*}
where $Z_i = {\cal F}_i + (M-1)(F_{\max}+1)$.
\end{theorem}
\begin{proof}
See proof of Theorem 1 in \cite{cem2013deccontext}. 
\end{proof}
}
\comment{
\begin{theorem}\label{thm:adaptivemain}
Consider four cases. C1 worst arrival and correlation, C2 worst arrival, best correlation, C3 best arrival, worst correlation, C4 best arrival and correlation. Let ${\cal L}^i_{C,l,B}$, $B = 12/(L d^{\alpha/2}2^{-\alpha}) +2)$ denote the set of suboptimal actions for level $l$ hypercube $C$. When DCZA is run with parameters $p = \frac{3\alpha + \sqrt{9 \alpha^2 + 8 \alpha d}}{2}$, $z = 2\alpha/p <1$, $D_1(t) = D_3(t) = t^z \log t$ and $D_2(t) = F_{\max} t^z \log t$, the regret by time $T$ is upper bounded by
\vspace{-0.1in}
\begin{align*}
\textrm{ C1) } & T^{f_1(\alpha,d)} \left( 2 A B L d^{\alpha/2} 2^{d+p-\alpha} + 2^{2d} Z_i \log T   \right) \\
&+ T^{f_2(\alpha,d)}  2^{2d+3} (M-1) F_{\max} \beta_2 \\ 
&+ T^{f_3(\alpha,d)} 2^{2d} \left( 2 (M-1) F_{\max} \beta_2 + Z_i + 4 \beta_2 |{\cal F}|_i  \right), \\
\textrm{ C2) } & T^{f_1(\alpha,d)} \left( 2 A B L d^{\alpha/2} 2^{d+p-\alpha} + 2^{2d} |{\cal K}_i| \log T   \right) \\
&+ T^{f_2(\alpha,d)}  2^{2d+3} (M-1) F_{\max} \beta_2 \\
&+ T^{f_3(\alpha,d)} 2^{2d} \left( 2 (M-1) F_{\max} \beta_2 + |{\cal K}_i| + 4 \beta_2 |{\cal F}|_i  \right), \\
\textrm{ C3) } & T^{2/3} \left( Z_i \log T \frac{\log_2 T}{p} + 2 AB L d^{\alpha/2} \frac{2^{2(p-\alpha)}}{2^{p-\alpha} -1}  \right) \\ 
&+ T^{1/3} 12 (M-1) F_{\max} \beta_2 \left( \frac{\log_2 T}{p} + 1 \right) \\
&+ \left( Z_i + 4 \beta_2 |{\cal F}_i| + 2(M-1) F_{\max} \beta_2  \right) \left( \frac{\log_2 T}{p} + 1 \right), \\
\textrm{ C4) } & T^{2/3} \left( |{\cal K}_i| \log T \frac{\log_2 T}{p} + 2 AB L d^{\alpha/2} \frac{2^{2(p-\alpha)}}{2^{p-\alpha} -1}  \right) \\
&+ T^{1/3} 12 (M-1) F_{\max} \beta_2 \left( \frac{\log_2 T}{p} + 1 \right) \\
&+ \left( |{\cal K}_i| + 4 \beta_2 |{\cal F}_i| + 2(M-1) F_{\max} \beta_2  \right) \left( \frac{\log_2 T}{p} + 1 \right),
\end{align*}
where
\begin{align*}
& Z_i = |{\cal F}_i| + (M-1)(F_{\max}+1), \\
&f_1(\alpha,d) = \frac{d+ \frac{\alpha}{2} + \frac{\sqrt{9\alpha^2 + 8 \alpha d}}{2}}{d+ \frac{3\alpha}{2} + \frac{\sqrt{9\alpha^2 + 8 \alpha d}}{2}} \\
& f_2(\alpha,d) = \frac{d}{d+\frac{3\alpha+\sqrt{9\alpha^2 + 8 \alpha d}}{2}} + \frac{2\alpha}{3\alpha + \sqrt{9\alpha^2 + 8 \alpha d} } \\
& f_3(\alpha,d) = \frac{d}{d+\frac{3\alpha+\sqrt{9\alpha^2 + 8 \alpha d}}{2}}.
\end{align*}
For any $\alpha>0$, $d \geq 1$, we have $f_1(\alpha,d) > f_2(\alpha,d) > f_3(\alpha,d)$.
\end{theorem}
\begin{proof}
See proof of Theorem 2 in \cite{cem2013deccontext}.
\end{proof}
}

In \cite{cem2013deccontext}, we used the context dimension $d$ as an input parameter and compared with the optimal solution given a fixed $d$. However, the context information can also be adaptively chosen over time. 
\newc{For example, in network security, the context can be either time of the day, origin of the data or both. The classifier accuracies will depend on what is used as context information. A detailed discussion of adaptively choosing the context is given in Section \ref{sec:discuss}.}
%
%
Remarks about computational complexity and memory requirements of CoS and DCZA can be found in \cite{cem2013deccontext}.

\newc{In the following subsections, we discuss three important aspects of online learning in data mining systems. The first is about the classification functions which learn online and improve their accuracies over time, instead of having fixed accuracies. The second is about delayed feedback. The third one is about the case when the true label is not always available, and the fourth one considers how explorations and trainings can be reduced. We present all of these aspects considering one of the two algorithms, but the same approach can also be applied to both algorithms.
}

%

\vspace{-0.1in}
\subsection{Online learning classification functions}

In our analysis we assumed that given a context $x$, the classification function accuracy $\pi_{k'}(x)$ is fixed. This holds when the classification functions are trained a priori, but the learners do not know the accuracy because $k'$ is not tested yet. By using our contextual framework, we can also allow the classification functions to learn over time based on the data. \rev{Usually in Big Data applications we cannot have the classifiers being pre-trained as they are often deployed for the first time in a certain setting. For example in \cite{chai2002bayesian}, Bayesian online classifiers are used for text classification and filtering.}
%
We do this by introducing time as a context, thus increasing the context dimension to $d+1$. Time is normalized in interval $[0,1]$ such that $0$ corresponds to $t=0$, 1 corresponds to $t=T$ and each time slot is an interval of length $1/T$. 
For an online learning classification function, intuitively the accuracy is expected to increase with the number of samples, and thus, $\pi_{k'}(x, t)$ will be non-decreasing in time for $k' \in {\cal F}$. 
\newc{On the other hand, when the true label is received and the classification function is updated, it can still make errors. Usually the increase in classification accuracy after a single update is bounded.
Based on these observations, we assume that the following assumption which is a variant of Assumption \ref{ass:lipschitz2}
holds for the online learning classification functions we consdier:
}
%
$\pi_{k'}(x,(t+1)/T) \leq \pi_{k'}(x,t/T) + L T^{-\alpha}$,
%
for some $L$ and $\alpha$ for all $k' \in {\cal F}$. Then we have the following theorem when online learning classifiers are present.
\begin{theorem}\label{theorem:cos2}
Let the CoS algorithm run with exploration control functions $D_1(t) = t^{2\alpha/(3\alpha+d+1)} \log t$, $D_2(t) = F_{\max} t^{2\alpha/(3\alpha+d+1)} \log t$, $D_3(t) = t^{2\alpha/(3\alpha+d+1)} \log t$ and slicing parameter $m_T = T^{1/(3\alpha + d+1)}$. Then, for any learner $i$, its regret is upper bounded by the following expression:
\add{\vspace{-0.1in}}
\begin{align*}
E[R_i(T)] &\leq T^{\frac{2\alpha+d+1}{3\alpha+d+1}}
\left( \frac{2 (2 L (d+1)^{\alpha/2}+6)}{\frac{2\alpha+d+1}{3\alpha+d+1}} + 2^{d+1} Z_i \log T \right) \\
&+ T^{\frac{\alpha+d+1}{3\alpha+d+1}} \frac{2^{d+3} (M-1) F_{\max} \beta_2}{2\alpha/(3\alpha+d+1)} \\
&+ T^{\frac{d}{3\alpha+d+1}} 2^{d+1} (2 Z_i \beta_2 
+ |{\cal K}_i|) + 4 (M-1) F_{\max} \beta_2,
\end{align*}
i.e., $R_i(T) = O \left(M F_{\max} T^{\frac{2\alpha+d+1}{3\alpha+d+1}} \right)$, where $Z_i = {\cal F}_i + (M-1)(F_{\max}+1)$.
\end{theorem}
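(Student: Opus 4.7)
The plan is to reduce this to Theorem 1 of \cite{cem2013deccontext} (the CoS bound for fixed-accuracy classifiers, cited in the paper as the result behind the regret of CoS in the fixed case) by embedding the time-dependent problem as a standard cooperative contextual bandit on an augmented context space of dimension $d+1$. Concretely, I would define the augmented context for learner $i$ at time $t$ as $\tilde{x}_i(t) := (x_i(t), t/T) \in [0,1]^{d+1}$, and view each classification function $k'$ as having expected reward $\tilde{\pi}_{k'}(\tilde{x}) := \pi_{k'}(x,\tau)$ where $\tilde{x} = (x,\tau)$. The CoS algorithm with slicing parameter $m_T$ applied to the augmented contexts is identical to CoS in the original formulation on a $(d+1)$-dimensional context space.

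The key step is to verify that the augmented accuracy $\tilde{\pi}_{k'}$ satisfies a bound of the Lipschitz form used in the proof of the CoS regret (Theorem 1 of \cite{cem2013deccontext}). For any two augmented contexts $(x,\tau)$ and $(x',\tau')$ in the same hypercube of side $1/m_T$, Assumption \ref{ass:lipschitz2} on the spatial component gives $|\pi_{k'}(x,\tau) - \pi_{k'}(x',\tau)| \le L\|x-x'\|^{\alpha} \le L(\sqrt{d}/m_T)^{\alpha}$, and iterating the per-slot bound $\pi_{k'}(x',(t+1)/T) \le \pi_{k'}(x',t/T) + LT^{-\alpha}$ at most $T|\tau-\tau'|$ times (together with the monotonicity of online-learning accuracies, which makes the one-sided bound two-sided up to the same constant) yields a bound on $|\pi_{k'}(x',\tau) - \pi_{k'}(x',\tau')|$ that is absorbed into a combined Lipschitz-type bound of order $L(\sqrt{d+1}/m_T)^{\alpha}$. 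This is the only place where the new assumption enters; once it is in place, the decomposition of the regret into training, exploration, and exploitation losses, together with the Chernoff--Hoeffding concentration arguments used for CoS, applies verbatim.

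Having established the Lipschitz-like condition on $[0,1]^{d+1}$, I would then choose the control functions and the slicing parameter so as to balance the near-optimal and suboptimal contributions to the regret in the standard way. The CoS analysis with context dimension $d+1$ shows that the optimal choice of $m_T$ equates the Lipschitz-induced error $(1/m_T)^{\alpha}$ with the exploration/training cost growing like $m_T^{d+1} t^{2\alpha/(3\alpha+d+1)} \log t$, giving $m_T = T^{1/(3\alpha+d+1)}$ and $D_1(t)=D_3(t)=t^{2\alpha/(3\alpha+d+1)} \log t$, $D_2(t) = F_{\max} t^{2\alpha/(3\alpha+d+1)} \log t$, exactly the parameter choices in the statement. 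Summing the training regret ($O(M F_{\max} T^{(\alpha+d+1)/(3\alpha+d+1)})$), the exploration regret ($O(Z_i T^{(2\alpha+d+1)/(3\alpha+d+1)} \log T)$), and the exploitation regret (bounded by a constant plus $O(T^{d/(3\alpha+d+1)})$ using Lemma-style arguments from \cite{cem2013deccontext}) yields the three terms in the theorem.

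The main obstacle is the Lipschitz-in-time reduction: the per-slot increment $LT^{-\alpha}$ is not the same as a H\"older-$\alpha$ bound in normalized time $\tau \in [0,1]$, and one must check that across a single level-$m_T$ hypercube (which contains $T/m_T$ slots) the accumulated drift $(T/m_T)\cdot LT^{-\alpha}$ is still dominated by $L(1/m_T)^{\alpha}$ for the chosen $m_T$; with $m_T = T^{1/(3\alpha+d+1)}$ this comparison must be verified and the resulting constant folded into the coefficient $2Ld^{\alpha/2}+6$ that appears, now with $d$ replaced by $d+1$ and $L d^{\alpha/2}$ by $L(d+1)^{\alpha/2}$. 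Once this bookkeeping is done, the rest of the proof is a direct copy of Theorem 1 of \cite{cem2013deccontext} with $d \mapsto d+1$.
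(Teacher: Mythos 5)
Your overall strategy --- augment the context with normalized time and invoke the CoS regret bound (Theorem 1 of \cite{cem2013deccontext}) with context dimension $d+1$ --- is exactly the paper's proof, which consists of the single sentence that the argument is the same with $d+1$ in place of $d$. The regret decomposition into training, exploration and exploitation terms and the parameter balancing you describe are the standard ingredients and are consistent with the constants in the statement.

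The gap is precisely the step you flag as ``must be verified'' and then defer. To run the $(d+1)$-dimensional analysis you need the H\"older condition to hold across each cell of the partition, i.e., the variation of $\pi_{k'}(x,\cdot)$ over a time-cell of normalized width $1/m_T$ must be $O(L\,m_T^{-\alpha})$. Iterating the stated per-slot assumption $\pi_{k'}(x,(t+1)/T)\le \pi_{k'}(x,t/T)+LT^{-\alpha}$ over the $T/m_T$ slots in one cell gives accumulated drift at most $L\,T^{1-\alpha}/m_T = L\,T^{\,1-\alpha-1/(3\alpha+d+1)}$, and requiring this to be $O(m_T^{-\alpha}) = O(T^{-\alpha/(3\alpha+d+1)})$ reduces to $(1-\alpha)\bigl(1-\tfrac{1}{3\alpha+d+1}\bigr)\le 0$, which holds only for $\alpha\ge 1$. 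For $\alpha<1$ the accumulated drift is not a larger constant to be folded into the coefficient $2L(d+1)^{\alpha/2}+6$; it grows polynomially in $T$, so the ``near-optimal arm'' part of the exploitation regret is no longer controlled. The reduction therefore needs the full H\"older condition in normalized time, $|\pi_{k'}(x,\tau)-\pi_{k'}(x,\tau')|\le L|\tau-\tau'|^{\alpha}$, imposed directly (this is presumably what the paper's phrase ``a variant of Assumption \ref{ass:lipschitz2}'' intends and what its one-line proof implicitly uses), rather than only its adjacent-slot specialization. With that condition assumed, your argument closes and coincides with the paper's; derived by iteration from the per-slot bound alone, it does not.
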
 
\begin{proof}
The proof is the same as proof of Theorem 1 in \cite{cem2013deccontext}, with context dimension $d+1$ instead of $d$.
\end{proof}

The above theorem implies that the regret in the presence of classification functions that learn online based on the data is $O(T^{(2\alpha+d+1)/(3\alpha+d+1)})$. \rev{From the result of Theorem \ref{theorem:cos2}, we see that our notion of context can capture any relevant information that can be utilized to improve the classification.
Specifically, we showed that by treating time as one dimension of the context, we can achieve sublinear regret bounds. Compared to Theorem 1 in \cite{cem2013deccontext}, in Theorem \ref{theorem:cos2}, the exploration rate is reduced from $O(T^{2\alpha/(2\alpha+d)})$ to $O(T^{2\alpha/(2\alpha+d+1)})$, while the memory requirement is increased from $O(T^{d/(3\alpha+d)})$ to $O(T^{(d+1)/(3\alpha+d+1)})$.}

\comment{
\rev{Another observation is that the regret scales only linearly with $M$ and $|{\cal F}_i|$ and it does not depend on $|{\cal F}_j|$, $j \in {\cal M}_{-i}$. This is because classifier $i$ does not learn about classification accuracies of classification functions of other classifier, but only helps them learn about the classification accuracies when necessary. We note that for a standard contextual algorithm \cite{langford2007epoch}, the regret scales linearly with $\sum_{j \in {\cal M}} |{\cal F}_j|$. } 
\rev{The result in Theorem \ref{theorem:cos} holds even when the context arrival is heterogeneous among the classifiers. In the following discussion we will show this result can only be slightly improved when it is known that the context arrival process is homogeneous among the classifiers.}
Consider the case that $q_i = q_j =q$ for all $i,j \in {\cal M}$ which means that the context arrival process to each classifier is identical. \rev{The following corollary shows that for all $P_l \in {\cal P}_T$ classifier $i$ will call a suboptimal classifier at most logarithmically many times.}
\begin{corollary}\label{cor:identical}
\rev{
When $q_i = q_j =q$ for all $i,j \in {\cal M}$, expected number of times classifier $i$ calls a suboptimal classifier is
\begin{align*}
O(\log (N^i_l(t))),
\end{align*}
for all $P_l \in {\cal P}_T$.
}
\end{corollary}
\begin{proof}
We need to show that for any $\gamma>0$
\begin{align*}
P(N^j_l(t) \leq (N^i_l(t))^\gamma)
\end{align*}
is small. Let
\begin{align*}
\mu_l = \int_{P_l} \bar{q}(x) dx,
\end{align*}
be the probability that a data belonging to set $P_l$ is received. Using a Chernoff-Hoeffding bound we can show that 
\begin{align*}
P \left( t\mu_l - \sqrt{t \log t} \leq N^i_l(t) \leq t\mu_l + \sqrt{t \log t} \right)
\geq 1- \frac{2}{t^2},
\end{align*}
and
\begin{align*}
P \left( (t\mu_l - \sqrt{t \log t})^\gamma \leq (N^i_l(t))^\gamma \leq (t\mu_l + \sqrt{t \log t})^\gamma \right)
\geq 1- \frac{2}{t^2},
\end{align*}
for all $t \geq 1$, $\gamma \in \mathbb{R}$, $i \in {cal M}$ and $P_l \in {\cal P}_T$.
Let 
\begin{align*}
{\cal A}(i,l,\gamma,t) = \{ (t\mu_l - \sqrt{t \log t})^\gamma \leq (N^i_l(t))^\gamma \leq (t\mu_l + \sqrt{t \log t})^\gamma  \}.
\end{align*}
Then we have 
\begin{align}
P(N^j_l(t) \leq (N^i_l(t))^\gamma) &\leq P(N^j_l(t) \leq (N^i_l(t))^\gamma, {\cal A}(i,l,\gamma,t), {\cal A}(j,l,1,t) ) + P({\cal A}(i,l,\gamma,t)^c) + P({\cal A}(j,l,1,t)^c) \notag\\
&\leq  P(N^j_l(t) \leq (N^i_l(t))^\gamma, {\cal A}(i,l,\gamma,t), {\cal A}(j,l,1,t) ) + \frac{4}{t^2}   \notag\\
&\leq P( (t\mu_l + \sqrt{t \log t})^\gamma > t\mu_l - \sqrt{t \log t}) + \frac{4}{t^2}. \label{eqn:bound2}
\end{align}
Note that the probability in (\ref{eqn:bound2}) is either 0 or 1 depending on whether the statement inside is true or false. Since $\gamma<1$ (actually it is very close to 0), taking the derivative of both sides, it can be seen that the rate of increase of $ t\mu_l - \sqrt{t \log t}$ is higher than the rate of increase of $(t\mu_l + \sqrt{t \log t})^\gamma$ when $t$ is large enough. Therefore there exists $\tau_{q, \gamma}$ such that the probability in (\ref{eqn:bound2}) is zero for all $t \geq \tau_{q, \gamma}$. From this result we see that the expected number of times steps for which $N^j_l(t) \leq (N^i_l(t))^\gamma$ is bounded above by
\begin{align*}
\tau_{q, \gamma} + \sum_{t'=1}^\infty \frac{4}{(t')^2},
\end{align*}
for all $i,j \in {\cal M}$ and $t >0$.

\end{proof}

%
By Corollary \ref{cor:identical} we conclude that the regret in each partition is at most $O( |{\cal K}_i| \log N^i_l(T))$. The following theorem provides an upper bound on the regret when $q_i = q_j =q$ for all $i,j \in {\cal M}$.
\begin{theorem}\label{thm:2}
When  $q_i = q_j =q$ for all $i,j \in {\cal M}$, the regret of CoS is upper bounded by
\begin{align*}
O( (M-1 + |{\cal F}_i|) T^{\frac{d}{d+1}}).
\end{align*}
\end{theorem}
\begin{proof}
 Since $\log$ is a concave function the regret is maximized when $N^i_l(T) = T/(m_T)^d$. Therefore the worst-case regret due to incorrect computations is at most
\begin{align*}
\sum_{l=1}^{(m_T)^d} O( |{\cal K}_i| \log N^i_l(T)) = O( (m_T)^d |{\cal K}_i| \log(T/(m_T)^d)).
\end{align*}
Similar to the worst-case scenario, the regret due to boundary crossings is at most $O(q_{\max} T/m_T)$. These terms are balance for $m_T = T^{1/d+1}$ which yields regret $O(T^{\frac{d}{d+1}})$.
\end{proof}

We observe that the regret bound proved in Theorem \ref{thm:2} is only slightly better than the regret bound $O(T^{\frac{d + \xi}{d+1}})$ for the worst-case scenario. This result shows that the worst-case performance difference between the two extreme cases is not much different.
\rev{Note that we used the fact that the data distribution has bounded density (\ref{eqn:boundeddensity}) in order to chose the slices according to $T$ such that we can control the regret in each slice.} This is almost always true, but in the worst case almost all data points may come from regions very close to the optimal boundary. In that case, the regret bound here will not work. Note that the regret depends on $q_{\max}$ and if it is too large the regret bound is not tight.
\rev{When proving Theorem \ref{thm:2}, we assume that a single instance arrives to each classifier at each time step. An alternative model is to assume that the instances arrive asynchronously to the classifiers in continuous time. For this let $\tau^i_l$ be the time of the $l$th arrival to classifier $i$. We assume that as soon as an instance arrives it is processed and then the true label is received. The delay between instance arrival, completion of classification and comparison with the true label can be captured by the cost $d_k$ for $k \in {\cal K}_i$. Based on this formulation let $J_i(t)$ be the number of instance arrivals to classifier $i$ by time $t$. Then we have the following corollary. 
\begin{corollary}
The regret given that $J_i(T) = n$ is upper bounded by 
\begin{align*}
\sum_{l=1}^{(m_T)^d} O( |{\cal K}_i| \log N^i_l(T)) = O( (m_T)^d |{\cal K}_i| \log(T/(m_T)^d))
\end{align*}
\end{corollary}

}
}
\vspace{-0.1in}
\subsection{Delayed feedback}

Next, we consider the case when the feedback is delayed. We assume that the true label for data instance at time $t$ arrives to learner $i$ with an $L_i(t)$ time slot delay, where $L_i(t)$ is a random variable such that $L_i(t) \leq L_{\max}$ with probability one for some $L_{\max}>0$ which is known to the algorithm. Algorithm CoS is modified so that it keeps in its memory the last $L_{\max}$ labels produced by classification and the sample mean accuracies are updated whenever a true label arrives.
\newc{We assume that when a label arrives with a delay, the time slot of the incoming data stream which generated the label is known.}
We have the following result for delayed label feedback.
\begin{corollary} \label{cor:uniform}
Consider the delayed feedback case where the true label of the data instance at time $t$ arrives at time $t+L_i(t)$, where $L_i(t)$ is a random variable with support in $\{0,1,\ldots, L_{\max}\}$, $L_{\max}>0$ is an integer. Let $R_i^{\textrm{nd}}(T)$ denote the regret of CoS for learner $i$ with no delay by time $T$, and $R_i^{\textrm{d}}(T)$ denote the regret of modified CoS for learner $i$ with delay by time $T$. Then we have,
%
$R_i^{\textrm{d}}(T) \leq L_{\max} + R_i^{\textrm{nd}}(T)$.
%
\end{corollary}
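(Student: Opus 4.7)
The plan is to couple the delayed and non-delayed runs of CoS on the same realization of contexts, true labels, and classifier predictions, and to localize all excess regret to a ``tail'' of at most $L_{\max}$ slots. The structural observation that drives the argument is that in the modified CoS the selection counters $N^i_{k,l}(t)$ and $N^i_{1,k,l}(t)$ are incremented when an arm is selected, not when its label is actually received; only the empirical averages $\bar r_{k,l}$ that enter the argmax in Exploit can lag. Hence the deterministic training/exploration/exploitation schedule driven by the control functions $D_1(t)$, $D_2(t)$, $D_3(t)$ fires at the same slots in both runs, and the two algorithms first diverge only in what Exploit returns.

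Next I would bound the information gap. Because every label arrives within $L_{\max}$ slots, at every time $t$ the delayed algorithm's sample means differ from those of the non-delayed run only in the contribution of at most $L_{\max}$ outstanding rewards corresponding to the most recent arm selections. Under the coupling, an Exploit decision differs between the two runs only when one of those outstanding labels would flip the argmax; otherwise both runs select the same arm and, because predictions and labels are shared across the coupling, accumulate identical per-slot regret. This confines the possible excess regret to slots tied to outstanding labels.

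Finally I would bound the residual contribution. At the horizon $T$ at most $L_{\max}$ labels can remain outstanding, so the cumulative number of decision slots whose contribution is not already accounted for in $R_i^{\textrm{nd}}(T)$ is bounded by $L_{\max}$. Since per-slot instantaneous regret is at most $1$ (accuracies lie in $[0,1]$ and costs are normalized, so $\pi_{k_i^*(x)}(x)-d^i_{k_i^*(x)}-(\pi_k(x)-d^i_k)\le 1$), summing over these tail slots yields the additive $L_{\max}$ term and gives $R_i^{\textrm{d}}(T)\le R_i^{\textrm{nd}}(T)+L_{\max}$.

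The main obstacle is making the coupling fully rigorous once the two decision trajectories begin to differ, since from that point on the counter and sample-mean states of the two runs diverge and they can no longer be compared term by term. I would address this by exploiting the fact that the training/exploration schedule is a deterministic function of $t$ through $D_1,D_2,D_3$, so only the Exploit branch can produce divergence, and by charging each divergence event to a currently in-flight label and using the invariant that there are at most $L_{\max}$ such labels at any time to control the cumulative charge by $L_{\max}$.
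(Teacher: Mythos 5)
Your coupling argument breaks down at the final counting step, and the gap is not the one you flagged but the one you used to ``fix'' it. The invariant you invoke --- at most $L_{\max}$ labels are in flight \emph{at any single time} --- does not bound the \emph{cumulative} number of exploitation slots at which the two runs can disagree. Each label is outstanding for up to $L_{\max}$ consecutive slots, new labels enter flight as old ones land, and so over the horizon there are up to $T$ slots at each of which some in-flight label could flip the argmax; charging each divergence to a currently in-flight label therefore yields a charge that can grow like the number of exploitation slots, not like $L_{\max}$. Worse, after the first divergence the two runs have selected different arms, so their sample means are computed over different selection histories and the premise that they differ only through outstanding rewards is destroyed permanently; a pathwise domination $R_i^{\textrm{d}}(T)\le R_i^{\textrm{nd}}(T)+L_{\max}$ of the coupled trajectories is not something you can establish this way (with $L_{\max}=1$ one can arrange recurring single-sample argmax flips).

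The paper's argument is of a different nature: it is a comparison of regret \emph{bounds}, not of coupled sample paths. The worst case is when every label is delayed by exactly $L_{\max}$, which is equivalent to running the undelayed algorithm started $L_{\max}$ slots after initialization; the Chernoff--Hoeffding concentration underlying the exploitation-error bounds still applies because the sample means are simply built from labels that arrive $L_{\max}$ slots later, and the only new loss is the at most $L_{\max}$ slots' worth of per-slot regret (each at most $1$) incurred before learning effectively begins. In other words, the additive $L_{\max}$ pays for a time shift of the entire analysis, not for a count of divergence events. If you want to repair your proof, you should abandon the trajectory coupling and instead re-derive the regret decomposition (explorations/trainings, suboptimal exploitation choices, near-optimal choices) showing that each term is unchanged when every sample mean at time $t$ is allowed to lag by at most $L_{\max}$ selections, which is exactly the shift argument the paper sketches.
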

\begin{proof}
\bremove{By a Chernoff-Hoeffding bound, it can be shown that the probability of deviation of the sample mean accuracy from the true accuracy decays exponentially with the number of samples. A new sample is added to sample mean accuracy whenever the true label of a previous classification arrives. Note that the worst case is when all labels are delayed by $L_{\max}$ time steps. This is equivalent to starting the algorithm with an $L_{\max}$ delay after initialization.} 
\badd{Due to the limited space the proof is given in our online appendix \cite{tekin2013arxivbig}.}
\end{proof}

The cost of label delay is additive which does not change the sublinear order of the regret. The memory requirement for CoS with no delay is $ |{\cal K}_i| (m_T)^d = 2^d (|{\cal F}_i| + M -1) T^{\frac{d}{ 3\alpha + d}}$, while memory requirement for CoS modified for delay is $L_{\max} + |{\cal K}_i| (m_T)^d$. Therefore, the order of memory cost is also independent of the delay. 
%

\vspace{-0.1in}
\newc{\subsection{True label is not always revealed}}

Sometimes it may not be possible to obtain the true label.
For example, the true label may not be revealed due to security reasons or failed communication. In this case it is not possible to update the sample mean rewards of the arms, therefore learning is interrupted. Assume that at each time step, the true label is revealed with probability $p_r$ (which can be unknown to the algorithm).
Let $M_i(t)$ be the number of times the true label is revealed to learner $i$ by time $t$. 
The following theorem gives an upper bound on the regret of CoS for this case. A similar regret bound can also be derived for DCZA.

\begin{theorem} \label{thm:nolabel}
Let the CoS algorithm run with exploration control functions $D_1(t) = t^{2\alpha/(3\alpha+d)} \log t$, $D_2(t) = F_{\max} t^{2\alpha/(3\alpha+d)} \log t$, $D_3(t) = t^{2\alpha/(3\alpha+d)} \log t$ and slicing parameter $m_T = T^{1/(3\alpha + d)}$. Then, for learner $i$,
\begin{align*}
R_i(T) &\leq T^{\frac{2\alpha+d}{3\alpha+d}}
\left( \frac{2 (2 L d^{\alpha/2}+6)}{(2\alpha+d)/(3\alpha+d)} + \frac{2^d Z_i}{p_r} \log T \right) \\
&+ T^{\frac{\alpha+d}{3\alpha+d}} \frac{2^{d+2} (M-1) F_{\max} \beta_2}{2\alpha/(3\alpha+d)} + + 4 (M-1) F_{\max} \beta_2\\
&+ T^{\frac{d}{3\alpha+d}} 2^d (2 Z_i \beta_2 
+ (|{\cal K}_i| + (M-1))/p_r),
\end{align*}
i.e., $R_i(T) = O \left(M F_{\max} T^{\frac{2\alpha+d}{3\alpha+d}}/p_r \right)$,
where $Z_i = |{\cal F}_i| + (M-1)(F_{\max}+1)$.
\end{theorem}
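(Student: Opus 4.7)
The plan is to adapt the proof of the original CoS regret bound (Theorem 1 in \cite{cem2013deccontext}) to account for the fact that the sample-mean updates, and hence the increments of the counters $N^i_{k,l}(t)$ and $N^i_{1,k,l}(t)$, only occur on those slots where the true label is actually revealed --- an event that happens i.i.d.\ across slots with probability $p_r$. I would decompose the regret as $R_i(T) = E[R_e(T)] + E[R_s(T)] + E[R_n(T)]$, where $R_e$ counts the loss in training/exploration slots, $R_s$ the loss from choosing a suboptimal arm in exploitation, and $R_n$ the loss from choosing a near-optimal-but-not-best arm in exploitation, exactly as in the original analysis.

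For $R_e(T)$, the key modification is that the exploration/training phase for an arm--hypercube pair $(k,P_l)$ now ends only after its counter surpasses $D_1(T)$, $D_2(T)$, or $D_3(T)$ based on label-returning slots. Since each selection increments the relevant counter only with probability $p_r$, the number of selections needed is a negative-binomial-type stopping time with mean $D(T)/p_r$. A Chernoff tail bound gives that with probability at least $1-1/T^2$ this stopping time is at most $D(T)/p_r + O(\sqrt{D(T)/p_r \log T})$. Since each such slot contributes a bounded constant to the regret, summing over all arms in ${\cal K}_i$ and over the $(m_T)^d$ hypercubes inflates the exploration/training bound of the original theorem by a factor of $1/p_r$, which is exactly what appears as the coefficient $2^d Z_i/p_r \log T$ on the leading $T^{(2\alpha+d)/(3\alpha+d)}$ term and as the $1/p_r$ factor on the $T^{d/(3\alpha+d)}$ term.

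For $R_s(T)$ and $R_n(T)$, I would argue that no new degradation occurs. The algorithm exploits only when every counter in the active hypercube is above its corresponding control function, so whenever exploitation happens the sample-mean reward estimate for each arm is built from at least $D_1(t)$ (or $D_3(t)$) label-confirmed samples. The Chernoff--Hoeffding concentration of $\bar r^i_{k,l}(t)$ around $\pi_k(x)-d^i_k$, and the Lipschitz argument of Assumption \ref{ass:lipschitz2} that converts spatial proximity in $P_l$ into proximity of expected rewards, proceed verbatim. Hence the probability of a suboptimal exploitation choice is still summable, giving the same $T^{(\alpha+d)/(3\alpha+d)}$ middle term and the near-optimal contribution unchanged.

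The main obstacle is the rigorous handling of the random delay between selections and counter updates in the $R_e$ bound: one must show that the tail event $\{\tau > D(t)/p_r + c\sqrt{D(t)\log T/p_r}\}$ contributes only $O(1)$ regret after a union bound over $t$, hypercubes, and arms, so that only the mean $D(t)/p_r$ appears in the leading-order term. Once this concentration is in place, combining the three decomposed bounds and simplifying yields precisely the expression in the theorem, and the $O(\cdot)$ statement $R_i(T)=O(MF_{\max}T^{(2\alpha+d)/(3\alpha+d)}/p_r)$ follows by identifying the dominant term.
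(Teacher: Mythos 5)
Your proposal follows essentially the same route as the paper: the regret is decomposed into exploration/training, suboptimal-exploitation, and near-optimal-exploitation parts; the latter two are argued to be unaffected because CoS only exploits once every counter in the active hypercube exceeds its control function (so the concentration and Lipschitz arguments go through verbatim); and the exploration cost is inflated by $1/p_r$ via a negative-binomial count of the slots needed to accumulate the required number of labeled observations. The one place you diverge is in controlling that negative binomial: the paper simply notes that the regret is already an expectation, so it suffices to compute $E[\tau_{\textrm{exp}}(T)] = A_i(T)/p_r$ directly from the negative-binomial mean (with $A_i(T) = Z_i T^{2\alpha/(3\alpha+d)}\log T + |{\cal F}_i| + 2(M-1)$), each exploration slot contributing at most a bounded constant. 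Your high-probability stopping-time bound with a union bound over $t$, hypercubes, and arms is also valid and would suffice for the $O(MF_{\max}T^{(2\alpha+d)/(3\alpha+d)}/p_r)$ claim, but it is more machinery than needed and introduces an extra $O(\sqrt{D(T)\log T/p_r})$ lower-order correction that does not appear in (and would slightly perturb) the exact constants of the stated finite-time bound; the expectation route recovers them exactly.
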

\begin{proof}
\bremove{Since the time slot $t$ is an exploitation slot only if $S_{i,l}(t) = \emptyset$ for $P_l$ which $x_i(t)$ belongs, the regret due to suboptimal and near optimal actions in exploitation steps will not be greater than the regret in the exploitations steps when the label is perfectly observed at each time step. Therefore the bounds given in Lemmas 2 and 4 in \cite{cem2013contextdata} will also hold for the case when label is not always observed. Only the regret due to explorations will be different, since more explorations are needed to observe sufficiently many labels such that $S_{i,l}(t) = \emptyset$.
Consider any partition $P_l$. From the definition of $S_{i,l}(T)$, the number of exploration steps in which a classification function $k_i \in {\cal F}_i$ is selected by learner $i$ and the label is observed is at most $\lceil T^{(2\alpha)/(3\alpha+d)}\rceil$, the number of training steps in which learner $i$ selects learner $j_i \in {\cal M}_{-i}$ and observes the true label is at most $\left\lceil F_{\max} T^{(2\alpha)/(3\alpha+d)} \log T \right\rceil$, and the number of exploration steps in which learner $i$ selects learner $j_i \in {\cal M}_{-i}$ is at most $\left\lceil T^{(2\alpha)/(3\alpha+d)} \log T \right\rceil$. 

Let $\tau_{\textrm{exp}}(T)$ be the random variable which denotes the smallest time step for which for each $k_i \in {\cal F}_i$ there are $\lceil T^{(2\alpha)/(3\alpha+d)}\rceil$ observations with label, for each $j_i \in {\cal M}_{-i}$ there are $\left\lceil F_{\max} T^{(2\alpha)/(3\alpha+d)} \log T \right\rceil$ observations with label for the trainings and $\left\lceil T^{(2\alpha)/(3\alpha+d)} \log T \right\rceil$ observations with label for the explorations. 
Then, $E[\tau_{\textrm{exp}}(T)]$ is the expected number of exploration slots by time $T$. Let $Y_{\textrm{exp}}(t)$ be the random variable which denotes the number of time slots in which the label is not revealed to learner $i$ till learner $i$ observes $t$ labels. Let $A_i(T) = Z_i T^{2\alpha/(3\alpha+d)} \log T +  (|{\cal F}_i| + 2(M-1))$. We have
%
$E[\tau_{\textrm{exp}}(T)] = E[Y_{\textrm{exp}}(A_i(T))] + A_i(T)$.
%
%
$Y_{\textrm{exp}}(A_i(T))$ is a negative binomial random variable with probability of observing no label at any time $t$ equals to $1-p_r$. Therefore $E[Y_{\textrm{exp}}(A_i(T))] = (1-p_r)A_i(T)/p_r$. Using this, we get
%
$E[\tau_{\textrm{exp}}(T)] = A_i(T)/p_r$.
%
The regret bound follows from substituting this into the proof of Theorem 1 in \cite{cem2013contextdata}.
}
\badd{Due to the limited space the proof is given in our online appendix \cite{tekin2013arxivbig}.}
\end{proof}

\subsection{Exploration reduction by increasing memory} \label{sec:reduction}

Whenever a new level $l$ hypercube is activated at time $t$, DCZA spends at least $O(t^z \log t)$ time steps to explore the arms in that hypercube. The actual number of explorations can be reduced by increasing the memory of DCZA. Each active level $l$ hypercube splits into $2^d$ level $l+1$ hypercubes when the number of arrivals to that hypercube exceeds $A 2^{pl}$. Let the level $l+1$ hypercubes formed by splitting of a level $l$ hypercube called {\em child} hypercubes. 
The idea is to keep $2^d$ sample mean estimates for each arm in ${\cal K}_i$ in each active level $l$ hypercube corresponding to its child level $l+1$ hypercubes, and to use the average of these sample means to exploit an arm when the level $l$ hypercube is active. Based on the arrival process to level $l$ hypercube, all level $l+1$ child hypercubes may have been explored more than $O(t^z \log t)$ times when they are activated. In the worst case, this guarantees that at least one level $l+1$ child hypercube is explored at least $A 2^{pl-d}$ times before being activated. The memory requirement of this modification is $2^d$ times the memory requirement of original DCZA, so in practice this modification is useful for $d$ small.

\comment{
\vspace{-0.15in}
\section{A distributed adaptive context partitioning algorithm} \label{sec:zooming}

In most of the real-world applications of online distributed data mining, the data can be both temporally and spatially correlated and data arrival patterns can be non-uniform.
%
%
Intuitively it seems that the loss due to choosing a suboptimal arm for a context can be further minimized if the algorithm inspects the regions of space with large number of data (hence context) arrivals more carefully. Next, we do this by introducing the {\em distributed context zooming} algorithm (DCZA).

\vspace{-0.2in}
\subsection{The DCZA algorithm}

In the previous section the finite partition of hypercubes ${\cal P}_T$ is \rev{formed by CoS} at the beginning by choosing the slicing parameter $m_T$. \rev{Differently, DCZA} adaptively generates the partition by learning from the context arrivals. Similar to CoS, DCZA estimates the qualities of the arms for each set in the partition.
DCZA starts with a single hypercube which is the entire context space ${\cal X}$, then divides the space into finer regions and explores them as more data arrives. In this way, the algorithm focuses on parts of the space in which there is large number of data arrivals. 
The idea of zooming into the regions of context space with high arrivals is previously addressed in \cite{slivkins2009contextual} by activating balls with smaller radius over time. However, the results in \cite{slivkins2009contextual} cannot be generalized to a distributed setting because each learner may have different active balls for the same context at the same time. Our proposed algorithm uses a more structured zooming with hypercubes to address the distributed nature of our problem.
Basically, the learning algorithm for learner $i$ should zoom into the regions of space with large number of data arrivals, but it should also persuade other learners to zoom to the regions of the space where learner $i$ has a large number of data arrivals. The pseudocode of DCZA is given in Figure \ref{fig:DDZA}, and the training, exploration, exploitation and initialization modules are given in Figures \ref{fig:mtrain} and \ref{fig:minitialize}.

For simplicity, in this section let ${\cal X} = [0,1]^d$, which is known by all learners. In principle, DCZA will work for any ${\cal X}$ that is bounded given that DCZA knows a hypercube $C_U$ which covers ${\cal X}$, i.e., ${\cal X} \subset C_U$. We call a $d$-dimensional hypercube which has sides of length $2^{-l}$ a level $l$ hypercube. Denote the partition of ${\cal X}$ generated by level $l$ hypercubes by ${\cal P}_l$. We have $|{\cal P}_l| = 2^{ld}$. Let ${\cal P} := \cup_{l=0}^\infty {\cal P}_l$ denote the set of all possible hypercubes. Note that ${\cal P}_0$ contains only a single hypercube which is ${\cal X}$ itself. 
At each time step, DCZA keeps a set of hypercubes that cover the context space which are mutually exclusive. We call these hypercubes {\em active} hypercubes, and denote the set of active hypercubes at time $t$ by ${\cal A}_t$.  Clearly, we have $\cup_{C \in {\cal A}_t} C = {\cal X}$. Denote the active hypercube that contains $x_t$ by $C_t$. The arm chosen at time $t$ only depends on the previous observations and actions taken on $C_t$. 
Let $N^i_C(t)$ be the number of times context arrives to hypercube $C$ in learner $i$ by time $t$. Once activated, a level $l$ hypercube $C$ will stay active until the first time $t$ such that $N^i_C(t) \geq A 2^{pl}$, where $p>0$ and $A>0$ are parameters of DCZA. After that, DCZA will divide $C$ into $2^d$ level $l+1$ hypercubes. 

When context $x_t \in C \in {\cal A}_t$ arrives, DCZA either explores or exploits one of the arms in ${\cal K}_i$. Similar to CoS, for each arm in ${\cal F}_i$, DCZA have a single exploration control function $D_1(t)$, while for each arm in ${\cal M}_{-i}$, DCZA have training and exploration control functions $D_2(t)$ and $D_3(t)$ that controls when to train, explore or exploit. 
For an arm $k \in {\cal F}_i$, all the observations are used by learner $i$ to estimate the expected reward of that arm. This estimation is different for $k \in {\cal M}_{-i}$. This is because learner $i$ cannot choose the classification function that is used by learner $k$. If the estimated rewards of classification functions of learner $k$ are inaccurate, $i$'s estimate of $k$'s reward will be different from the expected reward of $k$'s optimal classification function. 
Therefore, learner $i$ uses the rewards from learner $k$ to estimate the expected reward of learner $k$ only if it believes that learner $k$ estimated the expected rewards of its own classification functions accurately. In order for learner $k$ to estimate the rewards of its own classification functions accurately, if the number of data arrivals to learner $k$ in set $C$ is small, learner $i$ {\em trains} learner $k$ by asking it to classify $i$'s data and returns the true label to learner $k$ to make it learn from its actions. 
In order to do this, learner $i$ keeps two counters $N^i_{1,k,C}(t)$ and $N^i_{2,k,C}(t)$, which are initially set to $0$. At the beginning of each time step for which $N^i_{1,k,C}(t) \leq D_2(t)$, learner $i$ asks $k$ to send it $N^k_C(t)$ which is the number of data arrivals to learner $k$ from the activation of $C$ to time $t$ including the data sent by learner $i$. If $C$ is not activated by $k$ yet, then it sends $N^k_C(t) = 0$ and activates the hypercube $C$. Then learner $i$ sets $N^i_{1,k,C}(t) = N^k_C(t) - N^i_{2,k,C}(t)$ and checks again if $N^i_{1,k,C}(t) \leq D_2(t)$. If so, then it trains learner $k$ and updates $N^i_{1,k,C}(t)$. If $N^i_{1,k,C}(t) > D_2(t)$, this means that learner $k$ is trained enough so it will almost always select its optimal classification function when called by $i$. Therefore, $i$ will only use observations when $N^i_{1,k,C}(t) > D_2(t)$ to estimate the expected reward of $k$. To have sufficient observations from $k$ before exploitation, $i$ explores $k$ when $N^i_{1,k,C}(t) > D_2(t)$ and $N^i_{2,k,C}(t) \leq D_3(t)$ and updates $N^i_{2,k,C}(t)$. For simplicity of notation we let $N^i_{k,c} := N^i_{2,k,C}(t)$ for $k \in {\cal M}_{-i}$.
Let
\begin{align*}
&{\cal S}^i_C(t) := \left\{ k \in {\cal F}_i \textrm{ such that } N^i_{k,C}(t) \leq D_1(t)  \textrm{ or } k \in {\cal M}_{-i} \right. \\
&\left. \textrm{ such that } N^i_{1,k,C}(t) \leq D_2(t) \textrm{ or } N^i_{2,k,C}(t) \leq D_3(t)   \right\}.
\end{align*} 
If $S^i_C(t) \neq \emptyset$ then DCZA randomly selects an arm in $S^i_C(t)$ to explore, while if $S^i_C(t) = \emptyset$, DCZA selects an arm in 
%
$\argmax_{k \in {\cal K}_i} \bar{r}^i_{k,C_t}(t)$,
%
where $\bar{r}^i_{k,C_t}(t)$ is the sample mean of the rewards collected from arm $k$ in $C_t$ from the activation of $C_t$ to time $t$ for $k \in {\cal F}_i$, and it is the sample mean of the rewards collected from \rev{exploration and exploitation} steps of arm $k$ in $C_t$ from the activation of $C_t$ to time $t$ for $k \in {\cal M}_{-i}$.

\add{\vspace{-0.15in}}
\subsection{Analysis of the regret of DCZA}

We analyze the regret of DCZA under different context arrivals. In our first setting, we consider the worst-case scenario where there are no arrivals to learners other than $i$. In this case learner $i$ should train all the other learners in order to learn the optimal classification scheme. In our second setting, we consider the best scenario where data arrival to each learner is the same. In this case $i$ does not need to train other learners and the regret is much smaller. 

We start with a simple lemma which gives an upper bound on the highest level hypercube that is active at any time $t$.
\begin{lemma}\label{lemma:levelbound}
All the active hypercubes ${\cal A}_t$ at time $t$ have at most a level of 
%
$(\log_2 t)/p + 1$.
%
\end{lemma}
\remove{
\begin{proof}
Let $l+1$ be the level of the highest level active hypercube. We must have
\begin{align*}
A \sum_{j=0}^{l} 2^{pj} < t,
\end{align*}
otherwise the highest level active hypercube will be less than $l+1$. We have for $t/A >1$,
\begin{align*}
A \frac{2^{p(l+1)}-1}{2^p-1} < t
\Rightarrow 2^{pi} < \frac{t}{A} 
\Rightarrow i < \frac{\log_2 t}{p}.
\end{align*}
\end{proof}
}

In order to analyze the regret of DCZA, we first bound the regret in each level $l$ hypercube. We do this based on the worst-case and identical data arrival cases separately. The following lemma bounds the regret due to explorations in a level $l$ hypercube. 

\begin{lemma} \label{lemma:adapexplore}
Let $D_1(t) = D_3(t) =  t^z \log t $ and $D_2(t) = F_{\max} t^z \log t $. Then, for any level $l$ hypercube the regret due to explorations by time $t$ is bounded above by
\add{
$(|{\cal F}_i| + (M-1)(F_{\max} + 1)) (t^z \log t +1)$.
}
\remove{
\begin{align}
(|{\cal F}_i| + (M-1)(F_{\max} + 1)) (t^z \log t +1). \label{eqn:adapexplore1}
\end{align}
}
When the data arriving to each learner is identical and $|{\cal F}_i| \leq |{\cal F}_k|$, $k \in {\cal M}_{-i}$, regret due to explorations by time $t$ is bounded above by
\add{
$(|{\cal F}_i| + (M-1)) (t^z \log t +1)$.
}
\remove{
\begin{align}
(|{\cal F}_i| + (M-1)) (t^z \log t +1). \label{eqn:adapexplore2}
\end{align}
}
\end{lemma}
\remove{
\begin{proof}
The proof is similar to Lemma \ref{lemma:explorations}. Note that when the data arriving to each learner is the same and $|{\cal F}_i| \leq |{\cal F}_k|$, $k \in {\cal M}_{-i}$, we have $N^i_{1,k,C}(t) > D_2(t)$ for all $k \in {\cal M}_{-i}$ whenever $N^i_{j,C}(t) > D_1(t)$ for all $j \in {\cal F}_i$.
\end{proof}
}
\rev{From Lemma \ref{lemma:adapexplore}, the regret due to explorations increases exponentially with $z$ for each hypercube. 
}

For a level $l$ hypercube $C$, the set of suboptimal arms is given by
\vspace{-0.1in}
\begin{align*}
{\cal L}^i_{C,l,B} := \left\{ k \in {\cal K}_i :  \underline{\mu}_{k^*(C),C} - \overline{\mu}_{k,C} > B L d^{\alpha/2} 2^{-l \alpha} \right\},
\end{align*}
where $B>0$ is a constant.
In the next lemma we bound the regret due to choosing a suboptimal action in the exploitation steps in a level $l$ hypercube. 

\begin{lemma} \label{lemma:suboptimal}
Let ${\cal L}^i_{C,l,B}$, $B = 12/(L d^{\alpha/2}2^{-\alpha}) +2)$ denote the set of suboptimal actions for level $l$ hypercube $C$. When DCZA is run with parameters $p>0$, $2\alpha/p \leq z<1$, $D_1(t) = D_3(t) = t^z \log t$ and $D_2(t) = F_{\max} t^z \log t$, for any level $l$ hypercube $C$, the regret due to choosing suboptimal actions in exploitation steps, i.e., $E[R_{C,s}(T)]$, is bounded above by
\vspace{-0.1in}
\begin{align*}
4 \beta_2 |{\cal F}_i| + 8 (M-1) F_{\max} \beta_2 T^{z/2}/z.
\end{align*}
\vspace{-0.2in}
\end{lemma}
\remove{
\begin{proof}
The proof of this lemma is similar to the proof of Lemma \ref{lemma:suboptimal}, thus some steps are omitted.
Let $\Omega$ denote the space of all possible outcomes, and $w$ be a sample path. The event that the algorithm exploits in $C$ at time $t$ is given by
\begin{align*}
{\cal W}^i_{C}(t) := \{ w : S^i_{C}(t) = \emptyset, x_i(t) \in C, C \in {\cal A}_t  \}.
\end{align*}
Similar to the proof of Lemma \ref{lemma:suboptimal}, we will bound the probability that the algorithm chooses a suboptimal arm in an exploitation step in $C$, and then bound the expected number of times a suboptimal arm is chosen by the algorithm. Recall that loss in every step can be at most 2. Let ${\cal V}^i_{k,C}(t)$ be the event that a suboptimal action $k$ is chosen. Then
\begin{align*}
E[R_{C,s}(T)] \leq \sum_{t=1}^T \sum_{k \in {\cal L}^i_{C,l,B} } P({\cal V}^i_{k,C}(t), {\cal W}^i_{C}(t)).
\end{align*} 
Let ${\cal B}^i_{k,C}(t)$ be the event that at most $t^{\phi}$ samples in ${\cal E}^i_{k,C}(t)$ are collected from suboptimal classification functions of the $k$-th arm. Obviously for any $k \in {\cal F}_i$, ${\cal B}^i_{k,C}(t) = \Omega$, while this is not always true for $k \in {\cal M}_{-i}$. 
We have
\begin{align}
P \left( {\cal V}^i_{k,C}(t), {\cal W}^i_{C}(t) \right) 
&\leq P \left( \bar{r}^{\textrm{b}}_{k,C}(N^i_{k,C}(t))
\geq \bar{r}^{\textrm{w}}_{k^*(C),C}(N^i_{k^*(C),C}(t))
-  2 t^{\phi-1} , 
\bar{r}^{\textrm{b}}_{k,l}(N^i_{k,C}(t)) < \overline{\mu}_{k,C} + L d^{\alpha/2} 2^{-l\alpha} \right. \notag \\
& \left. + H_t +  2 t^{\phi-1}, 
 \bar{r}^{\textrm{w}}_{k^*(C),C}(N^i_{k^*(C),C}(t)) > \underline{\mu}_{k^*(C),C} - L d^{\alpha/2} 2^{-l\alpha} - H_t,
{\cal W}^i_{C}(t)    \right). \label{eqn:makezero} \\
&+ P \left( \bar{r}^{\textrm{b}}_{k,C}(N^i_{k,C}(t)) \geq \overline{\mu}_{k,C} + H_t, {\cal W}^i_{C}(t) \right) \notag \\
&+ P \left( \bar{r}^{\textrm{w}}_{k^*(C),C}(N^i_{k^*(C),C}(t))  \leq \underline{\mu}_{k^*(C),C} - H_t + 2 t^{\phi-1}, {\cal W}^i_{C}(t) \right) \notag \\
&+ P(({\cal B}^i_{k,C}(t))^c), \notag
\end{align}
where $H_t >0$. In order to make the probability in (\ref{eqn:makezero}) equal to $0$, we need
\begin{align}
4 t^{\phi-1} + 2H_t \leq (B-2) L d^{\alpha/2} 2^{-l\alpha}. \label{eqn:adaptivecondition1}
\end{align}
By Lemma \ref{lemma:levelbound}, (\ref{eqn:adaptivecondition1}) holds when 
\begin{align}
4 t^{\phi-1} + 2H_t \leq (B-2) L d^{\alpha/2} 2^{-\alpha} t^{-\alpha/p}. \label{eqn:adaptivecondition2}
\end{align}
For $H_t = 4 t^{\phi-1}$, $\phi = 1 - z/2$, $z \geq 2\alpha/p$ and $B = 12/(L d^{\alpha/2}2^{-\alpha}) +2)$, (\ref{eqn:adaptivecondition2}) holds by which (\ref{eqn:makezero}) is equal to zero. Also by using a Chernoff-Hoeffding bound we can show that
\begin{align*}
P \left( \bar{r}^{\textrm{b}}_{k,C}(N^i_{k,C}(t)) \geq \overline{\mu}_{k,C} + H_t, {\cal W}^i_{C}(t) \right) \leq e^{-2 (16 \log t)} \leq \frac{1}{t^2},
\end{align*}
and
\begin{align*}
P \left( \bar{r}^{\textrm{w}}_{k^*(C),C}(N^i_{k^*(C),C}(t))  \leq \underline{\mu}_{k^*(C),C} - H_t + 2 t^{\phi-1}, {\cal W}^i_{C}(t) \right) \leq e^{-2 (4 \log t)} \leq \frac{1}{t^2}.
\end{align*}
We also have $P({\cal B}^i_{k,C}(t)^c)=0$ for $k \in {\cal F}_i$ and 
\begin{align*}
P({\cal B}^i_{k,C}(t)^c) &\leq \frac{E[X^i_{k,C}(t)]}{t^\phi} \\
& \leq 2 F_{\max} \beta_2 t^{z/2 - 1}.
\end{align*}
for $k \in {\cal M}_{-i}$, where $X^i_{k,C}(t)$ is the number of times a suboptimal classification function of learner $k$ is selected when learner $i$ calls $k$ in exploration and exploitation phases in an active hypercube $C$ by time $t$. Combining all of these we get
\begin{align*}
P \left( {\cal V}^i_{k,C}(t), {\cal W}^i_{C}(t) \right)  \leq \frac{2}{t^2},
\end{align*}
for $k \in {\cal F}_i$ and
\begin{align*}
P \left( {\cal V}^i_{k,C}(t), {\cal W}^i_{C}(t) \right)  \leq \frac{2}{t^2} + 2 F_{\max} \beta_2 t^{z/2 - 1},
\end{align*}
for $k \in {\cal M}_{-i}$. These together imply that
\begin{align*}
E[R_{C,s}(T)] \leq 4 \beta_2 |{\cal F}_i| + 8 (M-1) F_{\max} \beta_2 \frac{T^{z/2}}{z}.
\end{align*}
\end{proof}
}

\rev{From Lemma \ref{lemma:suboptimal}, we see that the regret due to explorations increases exponentially with $z$ for each hypercube. 
}
In the next lemma we bound the regret due to choosing near optimal arms in a level $l$ hypercube.
\begin{lemma}\label{lemma:adapnearopt}
${\cal L}^i_{C,l,B}$, $B = 12/(L d^{\alpha/2}2^{-\alpha}) +2)$ denote the set of suboptimal actions for level $l$ hypercube $C$. When DCZA is run with parameters $p>0$, $2\alpha/p \leq z<1$, $D_1(t) = D_3(t) = t^z \log t$ and $D_2(t) = F_{\max} t^z \log t$, for any level $l$ hypercube $C$, the regret due to choosing near optimal actions in exploitation steps, i.e., $E[R_{C,n}(T)]$, is bounded above by
\begin{align*}
A B L d^{\alpha/2} 2^{p-\alpha} T^{\frac{p-\alpha}{p}} + 2 (M-1) F_{\max} \beta_2
\end{align*}
\end{lemma}
\remove{
\begin{proof}
Consider a level $l$ hypercube $C$. Let $X^i_{k,C}(t)$ denote the random variable which is the number of times a suboptimal classification function for arm $k \in {\cal M}_{-i}$ is chosen in exploitation steps of $i$ when the context is in set $C \in {\cal A}_t$ by time $t$. Similar to the proof of Lemma \ref{lemma:callother}, we have
\begin{align*} 
E[X^i_{k,C}(t)] \leq 2 F_{\max} \beta_2.
\end{align*}
Thus when a near optimal $k \in {\cal M}_{-i}$ is chosen the contribution to the regret from suboptimal classification functions of $k$ is bounded by $4 F_{\max} \beta_2$. The one-step regret of any near optimal classification function of any near optimal $k \in {\cal M}_{-i}$ is bounded by $2 B L d^{\alpha/2} 2^{-l \alpha}$. The one-step regret of any near optimal classification function $k \in {\cal F}_{i}$ is bounded by $B L d^{\alpha/2} 2^{-l \alpha}$. Since $C$ remains active for at most $A 2^{pl}$ context arrivals, we have
\begin{align*}
E[R_{C,n}(T)] &\leq 2 A B L d^{\alpha/2} 2^{(p-\alpha)l} + 2 (M-1) F_{\max} \beta_2.
\end{align*}
%
\end{proof}
}
\rev{From Lemma \ref{lemma:adapnearopt}, we see that the regret due to choosing near optimal actions in each hypercube increses with the parameter $p$ that determines how much the hypercube will remain active, and decreases with $\alpha$.}

Next we combine the results from Lemmas \ref{lemma:adapexplore}, \ref{lemma:suboptimal} and \ref{lemma:adapnearopt}, to obtain our regret bounds. All these lemmas bound the regret for a single level $l$ hypercube. The bounds in Lemmas \ref{lemma:adapexplore} and \ref{lemma:suboptimal} are independent of the level of the hypercube, while the bound in Lemma \ref{lemma:adapnearopt} depends on the level of the hypercube. We can also derive a level independent bound for $E[R_{C,n}(T)]$, but we can get a tighter regret bound by using the level dependent regret bound. 
In order to get the desired regret bound, we need to consider how many hypercubes of each level is formed by DCZA up to time $T$. The number of such hypercubes explicitly depends on the data/context arrival process. We will give regret bounds for the best and worst case context processes. Let the worst case process be the one in which all data that arrived up to time $T$ is uniformly distributed inside the context space, with minimum distance between any two context samples being $T^{-1/d}$. Let the best case process be the one in which all data arrived up to time $T$ is located inside the same hypercube $C$ of level $(\log_2 T)/p +1$. Also let the worst case correlation between the learners be the one in which data only arrives to learner $i$, and the best case correlation between the learners be the one in which the same data arrives to all learners at the same time. The following theorem characterizes the regret under these four possible extreme cases.
}
\comment{
\vspace{-0.1in}
For the worst case arrivals C1 and C2, the time parameter of the regret approaches linear as $d$ increases. This is intuitive since the gains of zooming diminish when data is not concentrated in a region of space. The regret order of DCZA is 
\vspace{-0.2in}
\begin{align*}
O \left( T^{\frac{d+\alpha/2+\sqrt{9\alpha^2 + 8 \alpha d}/2}{d+3 \alpha/2+\sqrt{9\alpha^2 + 8 \alpha d}/2}}  \right),
\end{align*}
while the regret order of CoS is
\vspace{-0.1in}
\begin{align*}
O \left( T^{\frac{d+2\alpha}{d+3 \alpha}}  \right) = 
O \left( T^{\frac{d+\alpha/2+\sqrt{9\alpha^2}/2}{d+3 \alpha/2+\sqrt{9\alpha^2}/2}}  \right).
\end{align*}
This is intuitive since CoS is designed to capture the worst-case arrival process by forming a uniform partition over the context space, while DCZA adaptively learns over time that the best partition over the context space is a uniform one. The difference in the regret order between DCZA and CoS is due to the fact that DCZA starts with a single hypercube and splits it over time to reach the uniform partition which is optimal for the worst case arrival process, while CoS starts with the uniform partition at the beginning. Note that for $\alpha d$ small, the regret order of DCZA is very close to the regret order of CoS.
For C2 which is the best correlation case, the constant that multiplies the highest order term is $|{\cal K}_i|$, while for C1 which is the worst correlation case this constant is $|{\cal F}_i| (M-1) (F_{\max}+1)$ which is much larger. The regret of any intermediate level correlation will lie between these two extremes. 
For the best case arrivals C3 and C4, the time parameter of the regret does not depend on the dimension of the problem $d$. The regret is $O(T^{2/3})$ up to a logarithmic factor independent of $d$, which is always better than the $O(T^{(d+2\alpha)/(d+3\alpha)})$ bound of CoS. The difference between the regret terms of C3 and C4 is similar to the difference between C1 and C2.

Next, we assess the computation and memory requirements of DCZA and compare it with CoS. DCZA needs to keep the sample mean reward estimates of ${\cal K}_i$ arms for each active hypercube. A level $l$ active hypercube becomes inactive if the context arrivals to that hypercube exceeds $A 2^{pl}$. Because of this, the number of active hypercubes at any time $T$ may be much smaller than the number of activated hypercubes by time $T$.
For cases C1 and C2, the maximum number of activated hypercubes is $O(|{\cal K}_i| T^{\frac{d}{d+(3\alpha + \sqrt{9\alpha^2+8\alpha})/2}})$, while for any $d$ and $\alpha$, the memory requirement of CoS is upper bounded by $O(|{\cal K}_i| T^{d/(d+3\alpha)})$. This means that based on the data arrival process, the memory requirement of DCZA can be higher than CoS. 
However, since DCZA only have to keep the estimates of rewards in currently active hypercubes, but not all activated hypercubes, in reality the memory requirement of DCZA can be much smaller than CoS which requires to keep the estimates for every hypercube at all times. Under the best case data arrival given in C3 and C4, at any time step there is only a single active hypercube. Therefore, the memory requirement of DCZA is only $O({\cal K}_i)$, which is much better than CoS. Finally DCZA does not require final time $T$ as in input while CoS requires it. Although CoS can be combined with the doubling trick to make it independent of $T$, the constants that multiply the time order of regret will be large.
}

\comment{
\begin{figure}[htb]
\fbox {
\begin{minipage}{0.95\columnwidth}
{\fontsize{8}{7}\selectfont
{\bf Initialize}(${\cal B}$):
\begin{algorithmic}[1]
\FOR{$C \in {\cal B}$}
\STATE{Set $N^i_C = 0$, $N^i_{k,C}=0$, $\bar{r}_{k,C}=0$ for $C \in {\cal A}, k \in {\cal K}_i$, $N^i_{1,k,C}=0$ for $k \in {\cal M}_{-i}$}
\ENDFOR
\end{algorithmic}
}
\end{minipage}
} \caption{Pseudocode of the initialization module} \label{fig:minitialize}
\add{\vspace{-0.23in}}
\end{figure}
}

\vspace{-0.1in}
\section{Extensions for Distributed Stream Mining Problems} \label{sec:discuss}

In this section we describe several extensions to our online learning algorithms and provide some application areas, including how our framework can capture the concept drift, what happens when a learner only sends its context information to another learner, extensions to asynhornous and batch learning, choosing contexts adaptively over time, and extensions to networks of learners and ensemble learning.

\remove{
\begin{table}[t]
\centering
{\fontsize{8}{8}\selectfont
\setlength{\tabcolsep}{.1em}
\begin{tabular}{|l|c|c|c|c|c|}
\hline
& worst arrival  & worst arrival,   & best arrival, & best arrival  \\
&and correlation & best correlation & worst correlation& and correlation \\
\hline
CoS & $O \left(M F_{\max} T^{\frac{2\alpha+d}{3\alpha+d}} \right)$ 
& $O \left(|{\cal K}_i| T^{\frac{2\alpha+d}{3\alpha+d}} \right)$ 
& $O \left(M F_{\max}  T^{\frac{2\alpha}{3\alpha+d}} \right)$
& $O \left(|{\cal K}_i| T^{\frac{2\alpha}{3\alpha+d}} \right)$ \\
\hline
DDZA & $O \left(M F_{\max} T^{f_1(\alpha,d)}\right)$ 
& $O\left( |{\cal K}_i| T^{f_1(\alpha,d)}\right)$ 
& $O\left(M F_{\max} T^{2/3} \right)$ & $O\left(|{\cal K}_i| T^{2/3} \right)$ \\
\hline
\end{tabular}
}
\caption{Comparison CoS and DCZA}
\label{tab:compregret}
\end{table}
}
\comment{
\add{
\begin{table}[t]
\centering
{\fontsize{7}{6}\selectfont
\setlength{\tabcolsep}{1em}
\begin{tabular}{|l|c|c|}
\hline
& CoS & DDZA \\
\hline
worst arrival & $O \left(M F_{\max} T^{\frac{2\alpha+d}{3\alpha+d}} \right)$ & $O \left(M F_{\max} T^{f_1(\alpha,d)}\right)$  \\
and correlation& & \\
\hline
worst arrival, & $O \left(|{\cal K}_i| T^{\frac{2\alpha+d}{3\alpha+d}} \right)$ & $O\left( |{\cal K}_i| T^{f_1(\alpha,d)}\right)$  \\
best correlation & & \\
\hline
best arrival,& $O \left(M F_{\max}  T^{\frac{2\alpha}{3\alpha+d}} \right)$ & $O\left(M F_{\max} T^{2/3} \right)$ \\
worst correlation&& \\
\hline
best arrival & $O \left(|{\cal K}_i| T^{\frac{2\alpha}{3\alpha+d}} \right)$ & $O\left(|{\cal K}_i| T^{2/3} \right)$ \\
and correlation & & \\
\hline
\end{tabular}
}
\caption{Comparison CoS and DDZA}
\label{tab:compregret}
\vspace{-0.4in}
\end{table}
}
}
\vspace{-0.1in}
\subsection{Context to capture concept drift}


Formally, concept drift is a change in the distribution the problem \cite{gama2004learning, gao2007appropriate} over time. Examples of concept drift include recommender systems where the interests of users change over time and network security applications where the incoming and outgoing traffic patterns vary depending on the time of the day (see Section \ref{sec:numerical}).

Researchers have categorized concept drift according to the properties of the drift. Two important metrics are the {\em severity} and the {\em speed} of the drift given in \cite{minku2010impact}. The severity is the amount of changes that the new concept causes, while the {\em speed} of a drift is how fast the new concept takes place of the old concept. 
Both of these categories can be captured by our contextual data mining framework. Given a final time $T$, let $x_i(t) = t/T$ be the context for $i \in {\cal M}$. Thus $x_i(t) \in [0,1]$ always. Then the Lipschitz condition given in Assumption \ref{ass:lipschitz2} can be rewritten as
%
$|\pi_{k'}(t) - \pi_{k'}(t')| \leq (L|t-t'|^\alpha)/T^\alpha$.
%
Here $L$ captures the severity while $\alpha$ captures the speed of the drift. 
Our distributed learning algorithms CoS and DCZA can both be used to address concept drift, and provide sublinear convergence rate to the optimal classification scheme, given by the results of Theorems 1 and 2 in \cite{cem2013deccontext}, for $d=1$, by using time as the context information. 

Most of the previous work on concept drift focused on incremental and online ensemble learning techniques with the goal of characterizing the advantage of ensemble diversity under concept drift \cite{baena2006early, minku2012ddd, stanley2003learning, kolter2007dynamic}.
%
%
However, to the best of our knowledge all the previous methods are develop in an ad-hoc basis with no provable performance guarantees. In this subsection, we showed how our distributed contextual learning framework can be used to obtain regret bounds for classification under concept drift. Our learning framework can be extended to ensemble learning by jointly updating the sample mean accuracies of classification functions and the weights of the ensemble learner. \newc{We discuss more about this in Section \ref{sec:ensemble}, and provide numerical results comparing the performance of our online ensemble learning scheme with the related literature in Section \ref{sec:numerical}.}
%

%

\subsection{Sending only the context but not the data}

We note that for learner $i$ the communication cost of sending the data and receiving the prediction from another learner $j_i \in {\cal M}_{-i}$ is captured by the cost $d^i_{j_i}$. However, if $d^i_{j_i}$ is too high for $j_i \in {\cal M}_i$ compared to the costs of the classification functions in ${\cal F}_i$, then in the optimal distributed solution given in (\ref{eqn:opt2}) that requires full data exchange, learner $j_i$ may never be selected for any $x_i(t) \in {\cal X}$. In this case, algorithms CoS and DCZA will converge to the optimal solution that only uses the arms in ${\cal F}_i$. But is there a better way by which $i$ can exploit other good learners with smaller cost?
One solution is that instead of sending the high dimensional data, $i$ can send the low dimensional context to learner $j_i$. In this way the cost of communication will be much smaller than $d^i_{j_i}$ and may even be less than the costs $d^i_{k_i}$, $k_i \in {\cal F}_i$. Then, learner $j_i$ will not actually classify, but knowing the context, it will send back a prediction which has the highest percentage of being correct among all the predictions made by $j_i$ in the partition which the context belongs. This may outperform the optimal solution which requires full data exchange, especially if the prediction results of $j_i$ are strongly correlated with the context. Numerical results for this setting is given in Section \ref{sec:numerical}.
\newc{Also, if there are privacy concerns, sending only the context information is reasonable since this provides less information to the other learner $j_i$, the sending the data itself.}

\subsection{Cooperation among the learners}
In our analysis we assumed that learner $i$ can call any other learner $j_i \in {\cal M}_{-i}$ with a cost $d^i_{j_i}$, and $j_i \in {\cal M}_{-i}$ always sends back its prediction in the same time slot. However, learner $j_i$ also has to classify its own data stream thus it may not be possible for it to classify $i$'s data without delay. We considered the effect of a bounded delay in Corollary \ref{cor:uniform}. 
We note that there is also a cost for learner $j_i$ associated with communicating with learner $i$, but it is small since learner $j_i$ only needs to send $i$ its prediction but not the data as learner $i$ does. \newc{Even tough learner $j_i$ does not have an immediate benefit from classifying $i$'s data in terms of the reward, it has a long-term benefit from learning the result of the classification it performed for $i$, by updating its sample mean classification function accuracy.}
Similar to $i$, any other learner can use other learners to increase its prediction accuracy minus classification cost. Since the learners are cooperative, this
does not affect the optimal learning policy we derived for learner $i$.

%
\subsection{General reward functions}

In our analysis we assumed that the goal is to maximize the classification accuracy minus the cost which is captured by $\pi_k(x) - d^i_k$ for $k \in {\cal K}_i$, for learner $i$. Our setting can be extended to capture more general goals such as maximizing a function of accuracy and cost.
For example, consider a communication network $i$ with two arms $l$ and $k$, which are used to detect attacks, for which $d^i_k >> d^i_l$ but $0<\pi_k(x) - \pi_l(x)<<1$. Let $g_k(\pi_k(x),d^i_k)$ be the expected loss of arm $k$ given that context is $x$. 
The network can go down when attacked at a specific context $x'$, thus, the expected loss $g_l(\pi_l(x'),d^i_l)$ for arm $l$ can be much higher than the expected loss $g_k(\pi_k(x'),d^i_k)$ for arm $k$. Then, in the optimal solution, arm $k$ will be chosen instead of arm $l$ even though $d^i_k >> d^i_l$.
Our results for algorithms CoS and DCZA will hold for any general context dependent reward function $g_k(.)$, if Assumption \ref{ass:lipschitz2} holds for this reward function.

\newc{\subsection{Asynchronous and Batch Learning}}

In this paper, we assumed that at each time step a data stream with a specific context arrives to each learner. Although the number of arrivals is fixed, the arrival rate of data with different contexts is different for each learner because we made no assumptions on the context arrival process. However, we can generalize this setting to multiple data stream and context arrivals to each learner at each time instant. 
This can be viewed as data stream arriving to each learner in batches. Actions are taken for all the instances in the batch, and then the labels of all the instances are revealed only at the end of the time slot. CoS and DCZA can be modified so that the counters $N^i_{k,l}$ $N^i_{1,k,l}$ and $N^j_l$ are updated at the end of each time slot, based on the contexts in the batch for that time slot. Batch updating is similar to the case when the label is revealed with delay. Therefore, given that there are finite number of context and data arrivals to each learner at each time step, it can be shown that the upper bound on the regret for batch learning have the same time order with the original framework where a data stream with a single context arrives to each learner at each time slot. 

Another important remark is that both CoS and DCZA can be asynchronously implemented by the learners, since we require no correlation between the data and context arrivals to different learners. Learner $i$ selects an arm in ${\cal F}_i$ or ${\cal M}_{-i}$ only when a new data stream arrives to it, or even when there is no new data stream coming to learner $i$, it can keep learning by classifying the other learners data streams, when requested by these learners. 

\subsection{Unsupervised Learners}

So far we assumed that the each learner either instantly receives the label at the end of each time slot, or with a delay, or each learner receives the label with a positive probability. Another interesting setting is when some learners never receive the label for their data stream. Let $i$ be such a learner. The only way for $i$ to learn about the accuracies of arms ${\cal F}_i$, is to classify the data streams of the learners who receive labels. Since learner $i$ can only learn about accuracies when called by another learner who receives the label, in general it is not possible for learner $i$ to achieve sublinear regret. One interesting case is when the data/context arrival to learner $i$ is correlated with another learner $j$ who observes its own label at the end of each time slot. Consider the following modification of CoS for learner $i$. At each time step $t$, learner $i$ sends $x_i(t)$ to every other learner ${\cal M}_{-i}$. Based on $x_i(t)$, every learner $j$ sends back to $i$ the sample mean accuracy of their estimated best classification function for $P_l$ such that $x_i(t) \in P_l$. Then, to classify its data stream learner $i$ selects the arm in ${\cal K}_i$ with the highest expected accuracy.   
If the correlation is such that whenever $x_{j'}(t) \in P_l$ we have $x_i(t) \in P_l$ for some $P_l \in {\cal P}_T$ for all $j,j' \in {\cal M}$ , then the regret of CoS for learner $i$ will be the same as Theorem 1 in \cite{cem2013deccontext}, since the trainings and explorations of learners who receive the label 
will be enough for learners who do not receive any label to estimate the accuracies of their own classification functions correctly with a high probability. 

On the contrary, even for simple cases such as independent data/context arrivals to each learner, an unsupervised learner $i$ may not achieve sublinear regret. We illustrate this in the following example.

\begin{example-non*}
Let $i$ be an unsupervised learner. Let ${\cal P}^i_T$ be the sets in ${\cal P}_T$ in which there exists at least one $x_i(t)$, $t \leq T$ and ${\cal P}^{-i}_T$ be the sets in ${\cal P}_T$ in which there exists at least one $(x_j(t))_{j \in {\cal M}_{-i}}$, $t \leq T$. For stochastic context arrivals ${\cal P}^i_T$ and ${\cal P}^{-i}_T$ are random variables. If $P({\cal P}^i_T \cap {\cal P}^j_T =  \emptyset ) > 0 $, then it is not possible for learner $i$ to achieve sublinear regret. This is because with positive probability, learner $i$ will learn nothing about the accuracy of its own classification functions for its context realization $x_i(1), \ldots, x_i(T)$. This means that it cannot do better than random guessing with positive probability, hence the regret will be linear in $T$.
\end{example-non*}

%
\vspace{-0.2in}
\newc{\subsection{Choosing contexts adaptively over time}}

We discussed that context can be one or multiple of many things such as the time, location, ID, or some other features of the incoming data stream. Given what we take the set ${\cal X}$ to be, the classification accuracies $\pi_k(x)$ will change. Since the time order of the regret grows exponentially with the dimension of the context space, sometimes it might be better to consider only a single feature of the incoming data stream as context. Assume that the number of features that can be used as context is $d$. At time $t$, $\boldsymbol{x}_i(t) = (x^1_i(t), x^2_i(t), \ldots, x^d_i(t))$ arrives to learner $i$ where $x^m_i(t) \in (0,1]$ for $m=1,\ldots,d$.
CoS (also DCZA) can be modified in the following way to adaptively choose the best context which maximizes the expected classification accuracy. We call the modified algorithm {\em CoS with multiple contexts} (CoS-MC).
Let 
%
${\cal S}_{i}(\boldsymbol{x}_i(t),t) = \cup_{x^m_i(t) \in \boldsymbol{x}_i(t)} {\cal S}^m_{i}(x^m_i(t),t)$,
%
where
%
${\cal S}^m_{i}(x^m_i(t),t) = {\cal S}^m_{i,l}(t)$ for $x^m_i(t) \in \left(\frac{l-1}{m_T},\frac{l}{m_T}\right]$,
$l=1,2,\ldots,m_T$,
%
and
\begin{align*}
{\cal S}^m_{i,l}(t) &= \{ k_i \in {\cal F}_i : N^{i,m}_{k_i,l}(t) \leq D_1(t) \textrm{ or } j_i \in {\cal M}_{-i} : \\
& N^{i,m}_{1,j_i,l}(t)  \leq D_2(t) \textrm{ or } N^{i,m}_{j_i,l}(t) \leq D_3(t)  \},
\end{align*}
where similar to the counters of standard CoS algorithm, $N^{i,m}_{k_i,l}(t)$, $N^{i,m}_{1,j_i,l}(t)$ and $N^{i,m}_{j_i,l}(t)$ represents the number of explorations of classification function $k_i \in {\cal K}_i$, trainings of learner $j_i \in {\cal M}_{-i}$ and explorations of learner $j_i \in {\cal M}_{-i}$ at the time steps the context lies in $P_l$ by time $t$, respectively.

At time $t$, CoS-MC randomly selects an arm in set ${\cal S}_{i}(\boldsymbol{x}_i(t),t)$ if ${\cal S}_{i}(\boldsymbol{x}_i(t),t) \neq \emptyset$. Otherwise, it selects the arm which offers the maximum estimated reward among all contexts, i.e.,
\begin{align}
\argmax_{k \in {\cal K}_i, m=1,\ldots,d} \bar{r}^{i,m}_{k,l(x^m_i(t))}(t), \label{eqn:multicontext}
\end{align}
where $\bar{r}^{i,m}_{k,l}(t)$ is the sample mean of the rewards collected from times when $m$th context is in $P_l$ and arm $k$ is chosen by time $t$. Note that independent of the specific context which arm selection at time $t$ depends on, the sample mean rewards corresponding to the selected arm $k$ of all sets to which the contexts at time $t$ belongs are updated based on the comparison of the prediction and the label. 

Let $\pi^m_k(x^m)$ be the expected accuracy of arm $k \in {\cal K}_i$ given $m$th context $x^m$. We assume that Assumption \ref{ass:lipschitz2} holds for all $\pi^m_k(x^m)$, $m=1,\ldots,d$ for some constants $\alpha>0$ and $L>0$.
For $j_i \in {\cal M}_{-i}$, let $\pi^m_{j_i}(x) := \max_{k_{j_i} \in {\cal F}_{j_i}} \pi^m_{k_{j_i}}(x)$.
We define the best single-context policy given $\pi^m_{k'}(x)$ for all $k' \in {\cal F}$, $m=1,\ldots,d$ as
%
$k^{\textrm{sb}}_i(\boldsymbol{x}) := \argmax_{k \in {\cal K}_i, m=1,\ldots,d} \pi^m_{k}(x^m) - d^i_k$.
%
Let $\pi^{\textrm{sb}}_k (\boldsymbol{x}) := \max_{m=1,\ldots,d} \pi^m_{k}(x^m) - d^i_k$, for $k \in {\cal K}_i$. The regret CoS-MC with respect to the best single-context policy is given by
\begin{align*}
&R^{\textrm{sb}}_i(T) := \sum_{t=1}^T  \pi^{\textrm{sb}}_{k^{\textrm{sb}}_i(\boldsymbol{x}_i(t))}(\boldsymbol{x}_i(t)) \\ 
&- E \left[ \sum_{t=1}^T ( I(\hat{y}_t(\alpha^{\textrm{CoS-CM}}_t(\boldsymbol{x}_i(t))) = y^i_t) - d^i_{\alpha^{\textrm{CoS-CM}}_t(\boldsymbol{x}_i(t))}) \right] ,
\end{align*}
Note that the best-single context policy is always better than the policy which a-priori selects one of the $d$ contexts (e.g., $m$th context) as its context, and selects arms optimally at each time step based on the context $x^m_i(t)$ only. In general, the best-single context policy is worse than the best policy which uses all $d$ contexts together to select an arm.

\begin{theorem}
Let the CoS-MC algorithm run with exploration control functions $D_1(t) = t^{2\alpha/(3\alpha+2)} \log t$, $D_2(t) = F_{\max} t^{2\alpha/(3\alpha+2)} \log t$, $D_3(t) = t^{2\alpha/(3\alpha+2)} \log t$ and slicing parameter $m_T = T^{1/(3\alpha + 2)}$. Then, for any learner $i$, we have
%
$R^{\textrm{sb}}_i(T) = O\left( d M F_{\max} T^{\frac{2\alpha+1}{3\alpha+1}}   \right)$.
%
\end{theorem}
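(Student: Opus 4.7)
The plan is to reduce the analysis to that of Theorem 1 in \cite{cem2013deccontext} applied in parallel along each of the $d$ context dimensions, with a context dimension equal to one in each single-dimension subproblem, and then to account for the fact that the exploitation rule maximizes jointly over arms and context dimensions via a union bound.

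First I would decompose the regret into training, exploration, and exploitation contributions, exactly as in the CoS analysis. Because CoS-MC maintains separate counters $N^{i,m}_{k_i,l}(t)$, $N^{i,m}_{1,j_i,l}(t)$, $N^{i,m}_{j_i,l}(t)$ for each context dimension $m=1,\ldots,d$ and each of the $m_T$ one-dimensional cells, the number of training steps is at most $d \cdot F_{\max}(M-1) m_T D_2(T)$ and the number of exploration steps is at most $d \cdot |{\cal K}_i| m_T (D_1(T)+D_3(T))$. Substituting $D_1(t)=D_3(t)=t^{2\alpha/(3\alpha+2)}\log t$, $D_2(t)=F_{\max}t^{2\alpha/(3\alpha+2)}\log t$ and $m_T=T^{1/(3\alpha+2)}$ yields a training/exploration regret of order $d M F_{\max} T^{(2\alpha+1)/(3\alpha+1)}\log T$, matching the stated bound up to the claimed factor $d$. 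Note that $(2\alpha+1)/(3\alpha+2)$ should be read as the one-dimensional-context exponent $(2\alpha+d')/(3\alpha+d')$ with $d'=1$; the $+1$ inside captures the single dimension used by each sample-mean estimator $\bar r^{i,m}_{k,l}(t)$, not the ambient dimension $d$.

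Next I would control the exploitation regret relative to the best single-context policy. Fix $t$ with $\mathcal S_i(\boldsymbol x_i(t),t)=\emptyset$, and let $(k^{\textrm{sb}},m^{\textrm{sb}})$ be the maximizer defining $\pi^{\textrm{sb}}_{k^{\textrm{sb}}_i(\boldsymbol x_i(t))}(\boldsymbol x_i(t))$. Write $k^{\textrm{CoS-MC}}, m^{\textrm{CoS-MC}}$ for the pair chosen by (\ref{eqn:multicontext}). On the event that every sample mean $\bar r^{i,m}_{k,l}(t)$ for $(k,m)$ with $x^m_i(t)\in P_l$ deviates from $\pi^m_k(x^m_i(t))-d^i_k$ by at most $H_t + L(1/m_T)^\alpha$, the chosen pair has expected per-step loss at most $2(H_t+L(1/m_T)^\alpha)$ relative to $(k^{\textrm{sb}},m^{\textrm{sb}})$; this is because on this event
\[
\bar r^{i,m^{\textrm{CoS-MC}}}_{k^{\textrm{CoS-MC}},l(x^{m^{\textrm{CoS-MC}}}_i(t))}(t)\;\ge\;\bar r^{i,m^{\textrm{sb}}}_{k^{\textrm{sb}},l(x^{m^{\textrm{sb}}}_i(t))}(t),
\]
and two Lipschitz-plus-deviation corrections convert the sample-mean comparison into a comparison of true conditional expected rewards. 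A Chernoff--Hoeffding bound together with a union bound over the at most $d\,|{\cal K}_i|$ arm-context pairs, cleaned up for the trainings of other learners exactly as in Lemma 2 of \cite{cem2013deccontext}, shows the deviation event holds with probability at least $1-c/t^2$ when the exploration counters are above their control functions. Choosing $H_t$ and $m_T$ as above balances the two sources of error, giving $O(1)$ per-step exploitation regret summed over $t$.

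Combining the two contributions yields $R_i^{\textrm{sb}}(T)=O(d M F_{\max} T^{(2\alpha+1)/(3\alpha+1)})$. The main obstacle I expect is bookkeeping for the suboptimal classification functions invoked by a called learner $j_i\in{\cal M}_{-i}$: the learner $j_i$ runs CoS-MC based on its own context $\boldsymbol x_{j_i}(t)$, which need not fall into the same cell as $\boldsymbol x_i(t)$, so the training control $D_2(t)$ must be sized so that by the time $i$ exploits any $j_i$, the probability that $j_i$ picks a suboptimal own-classifier is $O(t^{-2})$; this is why $D_2$ carries the extra $F_{\max}$ factor and is handled exactly as in the original CoS argument, dimension by dimension. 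Everything else is a direct lift of the proof of Theorem 1 of \cite{cem2013deccontext} with $d=1$ context dimension per subproblem and an extra multiplicative factor $d$ from the $d$ parallel cell structures.
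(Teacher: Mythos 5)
Your proposal is correct and takes essentially the same route as the paper's own (very terse) proof: bound the training and exploration steps by $d$ times those of one-dimensional CoS, and bound the exploitation regret by noting that the maximization in (\ref{eqn:multicontext}) ranges over $d\,|{\cal K}_i|$ arm--context pairs, so a union bound gives an extra factor of $d$ on top of Theorem 1 of \cite{cem2013deccontext} with context dimension $1$. The only discrepancy you flag --- the mismatch between the $3\alpha+2$ in the stated parameters and the $3\alpha+1$ in the stated bound --- is present in the paper itself and does not affect the validity of the $O(\cdot)$ claim.
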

\begin{IEEEproof}
\bremove{It is easy to see that the number of explorations and trainings of CoS-MC is at most $d$ times the number of explorations and trainings of CoS run with context dimension equal to 1. In the exploitation steps, in order to be optimal or near-optimal the arm chosen in (\ref{eqn:multicontext}) requires maximization over $d {\cal K}_i$ possible values, therefore, the regret due to suboptimal and near-optimal arm selections in CoS-MC is at most $d$ times the number of suboptimal and near-optimal arm selections in CoS run with context dimension equal to 1. A finite-time regret bound can also be proved by following steps similar to the proof of Theorem 1 in \cite{cem2013deccontext}.}
\badd{Due to the limited space the proof is given in our online appendix \cite{tekin2013arxivbig}.}
\end{IEEEproof}

\subsection{Instance distributed vs. feature distributed}

In our formulation, the incoming data stream of each learner can either be instance (horizontally) or feature (vertically) distributed. For feature distributed data, context may give information about what features to extract from the data. Note that if the features arriving to each learner is different from the features of other learner, then the context arrival process is the same as {\em worst-case correlation} described in Definition 1 in \cite{cem2013deccontext}. Basically, context space of learner $i$ is different form the context spaces of other learners, therefore this is equivalent to the case where no data/context arrives to the other learners from the perspective of learner $i$. The accuracies of other learners for $i$'s data/context is only learned by the trainings and exploration of the other learners by $i$. Therefore Theorems 1 and 2 in \cite{cem2013deccontext} for {\em worst-case correlation} holds for any learner $i$ for feature distributed data.

\newc{\subsection{Extension to online ensemble learning}} \label{sec:ensemble}

In this paper we assumed that the goal of each learner is to maximize the expected number of correct predictions about its own data stream, based on the prediction of a single classification function which is either one of its own classification functions or another learner's classification function. Another interesting online classification problem, which is studied by many researchers \cite{shalev2011pegasos, littlestone1989weighted, littlestone1988learning}, is to combine the predictions of individual learners to generate an ensemble prediction which is usually more accurate then the predictions of individual learners. 

\begin{figure}
\begin{center}
\includegraphics[width=0.9\columnwidth]{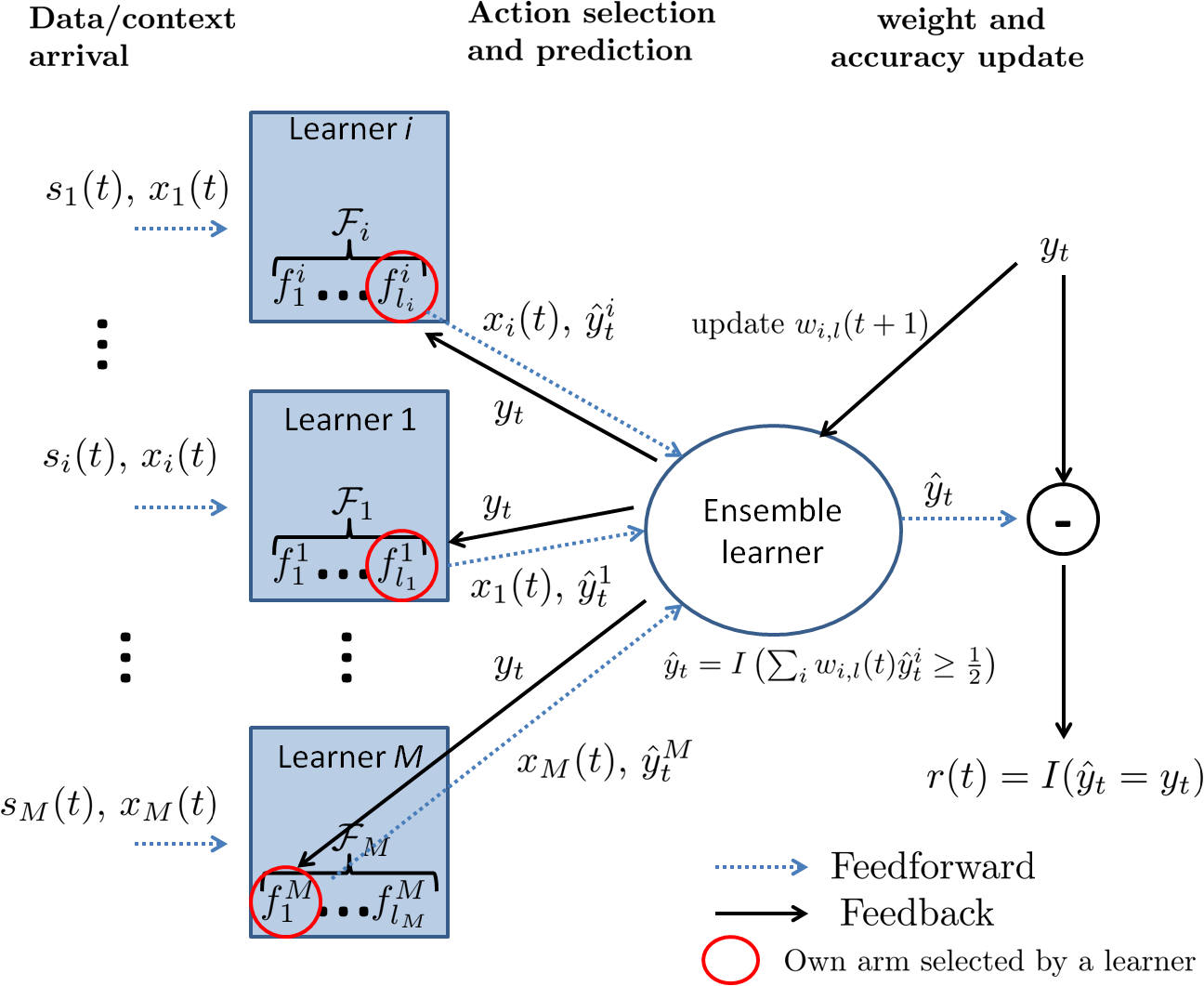}
\vspace{-0.1in}
\caption{Ensemble learning framework, where each learner only selects one of its own arms, and sends its prediction to the ensemble learner which produces a final prediction.} 
\label{fig:ensemble}
\end{center}
\vspace{-0.2in}
\end{figure}

Consider the system model given in Fig. \ref{fig:ensemble}. At each time step $t$, an instance with context $x_i(t)$ arrives to local learner $i$. Different from the previous sections, assume that learner $i$ chooses one of its classification functions in ${\cal F}_i$ (but not ${\cal K}_i$) based on its context and produces a prediction $\hat{y}^i_t$. Then, each learner $i$ sends its prediction and context to the ensemble learner. The ensemble learner checks to which set $P_l \in {\cal P}_T$ the context $x_i(t)$ belongs, and assigns weight $w_{i,l}(t)$ to learner $i$ which depends on history of $i$'s predictions in set $P_l$ by time $t$.
Then, the final prediction is made by a weighted majority rule, i.e.,
%
$\hat{y}_t = 1 \textrm{ if } \sum_{i \in {\cal M}} w_{i,l}(t) \hat{y}^i_t \geq 1/2$, and 
$\hat{y}_t = 0$ otherwise.
%
At the end of time slot $t$, the label is revealed to both the ensemble learner and the local learners. The goal is to maximize the expected number of correct predictions made by the ensemble learner, i.e., $E[\sum_{t=1}^T I(\hat{y}_t = y_t)]$.

Comparing the result of their predictions with the label, local learners update the estimated accuracies of their chosen classification functions, while the ensemble learner updates the weights of the local learners. 
\newc{For each set $P_l \in {\cal P}_T$, the weights can be updated using stochastic gradient descent methods \cite{shalev2011pegasos} or the weights corresponding to learners with false predictions can be decreased and the learners with correct predictions can be increased multiplicatively similar to the weighted majority algorithm and its variants \cite{littlestone1989weighted, littlestone1988learning}.
However, although some of these weight update methods are shown to asymptotically converge to the optimal weight vector, it is not possible to obtain finite-time regret bounds for these methods. It is an interesting future research direction to develop online learning methods for updating weights which will give sublinear regret bounds for the ensemble learner. Numerical results related to the ensemble learner is given in Section \ref{sec:numerical}.
} 

\vspace{-0.1in}
\subsection{Distributed online learning for learners in a network}

%
In general, learners may be distributed over a network, and direct connections may not exist between learners. For example consider the network in Fig. \ref{fig:network}. Here, learner $i$ cannot communicate with learner $j$ but there is a path which connects learner $i$ to learner $j$ via learner $j'$ or $j''$. We assume that every learner knows the network topology and the lowest-cost paths to every other learner.
Our online learning framework can be directly applied in this case. Indeed, this is a special case of our framework in which the cost $d^i_j \geq d^i_{j'}$. For example, if the cost is delay, then we have
$d^i_{j} = d^i_{j'} + d^{j'}_j$. When the cost is delay cost, in general for learner $i$, the cost of choosing learner $j$ is equal to the sum of the costs among the lowest-cost path between learner $i$ and $j$. Note that similar to the previous analysis, we assume that the lowest-cost path costs are normalized to be in $[0,1]$. If the lowest-cost path between two learner $i$ and $j$ is greater than 1, independent of the classification accuracy of learner $j$, learner $i$'s reward of choosing leaner $j$ will be negative, which means that learner $i$ will never call learner $j$. 

\begin{figure}
\begin{center}
\includegraphics[width=0.6\columnwidth]{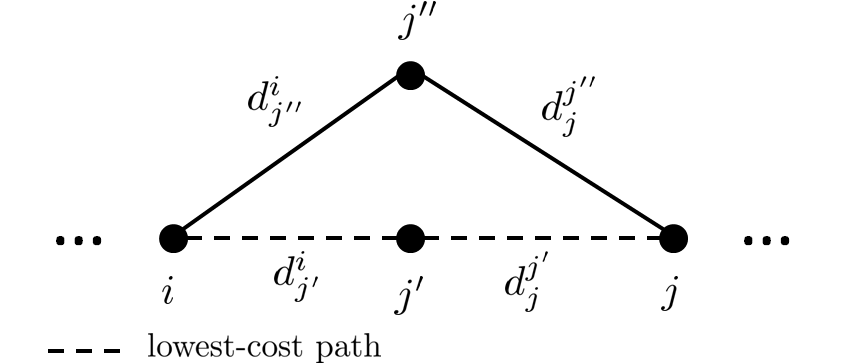}
\vspace{-0.1in}
\caption{A network topology in which learner $i$ have no direct connection with learner $j$.} 
\label{fig:network}
\end{center}
\vspace{-0.2in}
\end{figure}

This network scenario can be generalized such that the link costs between the learners can be unknown and time-varying, or the topology of the network may be unknown and time-varying. We leave the investigation of these interesting scenarios as a future work.

\section{Numerical Results} \label{sec:numerical}

In this section we provide numerical results for our proposed algorithms CoS and DCZA both using a real-world data set. In the following definition we give different context arrival processes which captures the four extreme points of context arrivals. 

\begin{definition}\label{defn:context}
We call the context arrival process $\{(x_1(t), \ldots, x_K(t))\}_{t=1,\ldots,T}$, {\em the worst-case arrival process} if for each $i \in {\cal M}$, $\{x_i(t)\}_{t=1,\ldots,T}$ is uniformly distributed inside the context space, with minimum distance between any two context samples being $T^{-1/d}$; {\em the best-case arrival process} if for each $i \in {\cal M}$, $x_i(t) \in C$ for all $t=1,\ldots,T$ for some level $\lceil(\log_2 T)/p\rceil +1$ hypercube $C$.
We say the context arrival process has {\em worst-case correlation} if context only arrives to learner $i$ (no context arrivals to other learners); has {\em best-case correlation} if $x_i(t) = x_j(t)$ for all $i,j \in {\cal M}$, $t=1,\ldots,T$. We define the following four cases to capture the extreme points of operation of DCZA:
\begin{itemize}
\item \textbf{C1} worst-case arrival and correlation
\item \textbf{C2} worst-case arrival, best-case correlation
\item \textbf{C3} best-case arrival, worst-case correlation
\item \textbf{C4} best-case arrival and correlation
\end{itemize}
\end{definition}

\vspace{-0.1in}
\subsection{Simulation Setup}

For our simulations, we use the network security data from KDD Cup 1999 data set. We compare the performance of our learning algorithms with {\em AdaBoost} \cite{freund1995desicion} and the online version of AdaBoost called {\em sliding window AdaBoost} \cite{oza2001online}. 

The network security data has 42 features. The goal is to predict at any given time if an attack occurs or not based on the values of the features.
We run the simulations for three different context information; (A1) context is the label at the previous time step, (A2) context is the feature named {\em srcbytes}, which is the number of data bytes from source to destination, (A3) context is time. All the context information is normalized to be in $[0,1]$. There are $4$ local learners. Each local learner has $2$ classification functions. \newc{Unless noted otherwise}, the classification costs $d_k$ are set to $0$ for all $k \in {\cal K}_1$.

All classification functions are trained using 5000 consecutive samples from different segments of the network security data. Then, they are tested on $T = 20000$ consecutive samples. We run simulations for two different sets of classifiers. 
In our first simulation S1, there are two good classifiers that have low number of errors on the test data, while in our second simulation S2, there are no good classifiers. The types of classification functions used in S1 and S2 are given in Table \ref{tab:sim_setup} along with the number of errors each of these classification functions made on the test data. From Table \ref{tab:sim_setup}, we can observe that the error percentage of the best classification function is $3$ in S1, while it is $47$ in S2. A situation like in S2 can appear when the distribution of the data changes abruptly, i.e., concept drift, so that the classification functions trained on the old data becomes inaccurate for the new data. In our numerical results, we will show how the context information can be used to improve the performance in both S1 and S2.
The accuracies of the classifiers on the test data are unknown to the learners so they cannot simply choose the best classification function. 
In all our simulations, we assume that the test data sequentially arrives to the system and the label is revealed to the algorithms with a one step delay. 

\begin{table}[t]
\centering
{\fontsize{8}{6}\selectfont
\setlength{\tabcolsep}{.1em}
\begin{tabular}{|l|c|c|c|c|c|}
\hline
Learner & 1 &2 &3 & 4\\
\hline
Classification  & Naive Bayes,  &Always $1$, & RBF Network, & Random Tree,  \\
Function (S1)  & Logistic & Voted Perceptron &J48 & Always $0$\\
\hline
 Error  & 47,  & 53,  & 47, & 47,  \\
percentage (S1) & 3 & 4 &  47 & 47\\
\hline
 Classification  & Naive Bayes,  &Always $1$,  & RBF Network,  & Random Tree,  \\
Function (S2) & Random & Random & J48 & Always $0$ \\
\hline
 Error & 47,  & 53,  & 47,  & 47,  \\ 
percentage (S2) & 50 & 50 & 47 & 47 \\
\hline
\end{tabular}
}
\add{\vspace{-0.1in}}
\caption{Base classification functions used by the learners and their error percentages on the test data.}
\vspace{-0.25in}
\label{tab:sim_setup}
\end{table}

Since we only consider single dimensional context, $d=1$. However, due to the bursty, non-stochastic nature of the network security data we cannot find a value $\alpha$ for which Assumption \ref{ass:lipschitz2} is true. Nevertheless, we consider two cases, Z1 and Z2, given in Table \ref{tab:par_setup}, for CoS and DCZA parameter values.
In Z2, the parameters for CoS and DCZA are selected according to Theorems 1 and 2 in \cite{cem2013deccontext}, assuming $\alpha=1$. 
In Z1, the parameter values are selected in a way that will reduce the number of explorations and trainings. However, the regret bounds for Theorems 1 and 2 in \cite{cem2013deccontext} may not hold for these values in general. 

\begin{table}[t]
\centering
{\fontsize{8}{6}\selectfont
\setlength{\tabcolsep}{.3em}
\begin{tabular}{|l|c|c|c|c|c|c|}
\hline
 & $D_1(t)$ & $D_2(t)$ & $D_3(t)$ & $m_T$ & $A$ & $p$ \\
\hline
(Z1) CoS & $t^{1/8} \log t$ & $2 t^{1/8} \log t$ & $t^{1/8} \log t$ & $\lceil T \rceil^{1/4}$ & & \\
\hline
(Z1) DCZA & $t^{1/8} \log t$ & $2 t^{1/8} \log t$ & $t^{1/8} \log t$ &  & $1$ & $4$ \\
\hline
(Z2) CoS & $t^{1/2} \log t$ & $2 t^{1/2} \log t$ & $t^{1/2} \log t$ & $\lceil T \rceil^{1/4}$ & & \\
\hline
(Z2) DCZA & $t^{2/p} \log t$ & $2 t^{2/p} \log t$ & $t^{2/p} \log t$ &  & $1$ & $(3+\sqrt{17})/2$ \\
\hline
\end{tabular}
}
\add{\vspace{-0.05in}}
\caption{Input parameters for CoS and DCZA for two different parameter sets Z1 and Z2.}
\label{tab:par_setup}
\add{\vspace{-0.4in}}
\end{table}

\vspace{-0.1in}
\subsection{Simulation Results for CoS and DCZA}

In our simulations we consider the performance of learner 1. Table \ref{tab:sim_results} shows under each simulation and parameter setup the percentage of errors made by CoS and DCZA and the percentage of time steps spent in training and exploration phases for learner 1. We compare the performance of DCZA and CoS with AdaBoost, sliding window AdaBoost (SWA), and CoS with no context (but still decentralized different from a standard bandit algorithm) whose error rates are also given in Table \ref{tab:error_comp}. AdaBoost and SWA are trained using 20000 consecutive samples from the data set different from the test data. 
SWA re-trains itself in an online way using the last $w$ observations, which is called the window length. Both AdaBoost and SWA are ensemble learning methods which require learner 1 to combine the predictions of all the classification functions. Therefore, when implementing these algorithms we assume that learner 1 has access to all classification functions and their predictions, whereas when using our algorithms we assume that learner 1 only has access to its own classification functions and other learners but not their classification functions. Moreover, learner 1 is limited to use a single prediction in CoS and DCZA. This may be the case in a real system when the computational capability of local learners are limited and the communication costs are high.

First, we consider the case when the parameter values are as given in Z1. We observe that when the context is the previous label, CoS and DCZA perform better than AdaBoost and SWA for both S1 and S2. This result shows that although CoS and DCZA only use the prediction of a single classification function, by exploiting the context information they can perform better than ensemble learning approaches which combine the predictions of all classification functions. We see that the error percentage is smallest for CoS and DCZA when the context is the previous label.
This is due to the bursty nature of the attacks.
The exploration percentage for the case when context is the previous label is larger for DCZA than CoS. 
As we discussed in Section \ref{sec:reduction}, the number of explorations of DCZA can be reduced by utilizing the observations from the old hypercube to learn about the accuracy of the arms in a newly activated hypercube. When the context is the feature of the data or the time, for S1, CoS and DCZA perform better than AdaBoost while SWA with window length $w=100$ can be slightly better than CoS and DCZA. But again, this difference is not due to the fact that CoS and DCZA makes too many errors. It is because of the fact that CoS and DCZA explores and trains other classification functions and learners. 
AdaBoost and SWA does not require these phases. But they require communication of predictions of all classification functions and communication of all local learners with each other at each time step. Moreover, SWA re-trains itself by using the predictions and labels in its time window, which makes it computationally inefficient.
Another observation is that using the feature as context is not very efficient when there are no good classifiers (S2). However, the error percentages of CoS and DCZA ($39\%$ and $38\%$ respectively) are still lower than the error percentage of the best classifier in S2 which is $47\%$. Moreover, CoS and DCZA performs better than CoS with no context for all scenarios with parameter values given by Z1. 
We observe that both CoS and DCZA performs poorly when the set of parameters is given by Z2. This is due to the fact that the percentage of training and exploration phases is too large for Z2, thus these algorithms cannot exploit the information they gathered efficiently. Another important reason for the poor performance is the short time horizon. As the time horizon grows, we expect the exploration and training rates to decrease, and the exploitation rate to increase which will improve the performance. 

\begin{table}[t]
\centering
{\fontsize{8}{6}\selectfont
\setlength{\tabcolsep}{.3em}
\begin{tabular}{|l|c|c|}
\hline
(Parameters) Algorithm & (S1) Error $\%$  & (S2) Error $\%$ \\
\hline
(Z1) CoS (previous label as context) & 0.7 & 0.9  \\
\hline
(Z1) DCZA (previous label as context) & 1.4 & 1.9  \\
\hline
AdaBoost & 4.8 & 53  \\
\hline
($w=100$) SWA  & 2.4 & 2.7 \\
\hline
($w=1000$) SWA  & 11 & 11  \\
\hline
(Z1) CoS (no-context) & 5.2 & 49.8  \\
\hline 
\end{tabular}
}
\add{\vspace{-0.05in}}
\caption{Comparison of error percentages of CoS, DCZA, AdaBoost, SWA and CoS with no context.}
\label{tab:error_comp}
\vspace{-0.2in}
\end{table}

\begin{table}[t]
\centering
{\fontsize{8}{6}\selectfont
\setlength{\tabcolsep}{.3em}
\begin{tabular}{|l|c|c|c|}
\hline
(Setting) & Error $\%$ & Training $\%$ & Exploration $\%$  \\
Algorithm  & context=A1,A2,A3 & context=A1,A2,A3 & context=A1,A2,A3 \\
\hline
(Z1,S1) CoS & 0.7, 4.6, 4.8 & 0.3, 3, 2.8 & 1.4, 6.3, 8.5 \\
\hline
(Z1,S1) DCZA & 1.4, 3.5, 3.2 & 0.4, 1.3, 0.9 & 4, 5.9, 7 \\
\hline
(Z1,S2) CoS & 0.9, 39, 10 & 0.3, 3, 2.8 & 1.5, 6.5, 8.6 \\
\hline
(Z1,S2) DCZA & 1.9, 38, 4.8 & 0.4, 1.3, 1 & 4, 6, 7 \\
\hline
(Z2,S1) CoS & 16, 14, 41 & 8.5, 16, 79 & 55 27 20\\
\hline
(Z2,S1) DCZA & 31, 29, 29 & 33 19 87 & 66 66 12 \\
\hline 
\end{tabular}
}
\add{\vspace{-0.05in}}
\caption{Error, training and exploration percentages of CoS and DCZA under different simulation and parameter settings. (A1) context as the previous label, (A2) context as srcbytes feature, (A3) context as time.}
\label{tab:sim_results}
\vspace{-0.2in}
\end{table}

\newc{The results in Table \ref{tab:sim_results} are derived for the case when all learners receive the same instance and observe the same label at each time step. Therefore they correspond to the {\em best-case correlation} given in Definition \ref{defn:context}. Moreover, when the context is time, we have {\em worst-case} arrival (as in \textbf{C2}), while when the context is the previous label, arrivals are similar to the {\em best-case arrival} process (as in \textbf{C4}), where instead of a single hypercube of level $\lceil(\log_2 T)/p\rceil +1$, arrivals happen to two different level $\lceil(\log_2 T)/p\rceil +1$ hypercubes one containing $x=0$ and the other one containing $x=1$. 
We also run simulations for the worst-case correlation (\textbf{C1} or \textbf{C3}) for CoS (results for DCZA will be similar) for three different contexts A1, A2 and A3. From the results given in Table \ref{tab:sim_results2}, we observe that the exploration and training percentages increases for the worst-case correlation between the learners, which also causes an increase in the error percentages. 
}

\begin{table}[t]
\centering
{\fontsize{8}{6}\selectfont
\setlength{\tabcolsep}{.3em}
\begin{tabular}{|l|c|c|c|}
\hline
(Setting) & Error $\%$ & Training $\%$ & Exploration $\%$  \\
Algorithm & context=A1,A2,A3 & context=A1,A2,A3 & context=A1,A2,A3 \\
\hline
(Z1,S1) CoS & 1.8, 4.1, 6.7 & 2, 9.2, 10.3 & 1.4, 3.6, 8.5 \\
\hline
(Z1,S2) CoS & 24.6, 44.3, 31.3 & 2, 9.2, 10.3 & 1.4, 3.6, 8.5  \\
\hline
\end{tabular}
}
\caption{Error, training and exploration percentages of CoS for  worst-case correlation between the learners for three different context types.}
\label{tab:sim_results2}
\vspace{-0.2in}
\end{table}

\vspace{-0.1in}
\subsection{Simulation Results for Extensions on CoS and DCZA}

\newc{Firstly, we simulate the ensemble learner given in Section \ref{sec:ensemble} for CoS (called {\em ensemble CoS}), with $d_k=0$ for $k \in {\cal K}_i$, $i \in {\cal M}$. We take time as the context, and consider two different weight update rules. In the {\em context-independent} update rule, weights $w_i(1)$ for each learner is initially set to $1/4$, and $\boldsymbol{w}(t) = (w_1(t),\ldots, w_4(t))$ is updated based on the stochastic gradient descent rule given in Algorithm 2 of \cite{yu2013fast}, with coefficient $1/\alpha$ instead of $1/(\alpha*t)$ to capture the non-stationarity of the incoming data stream where $\alpha=100$. In the {\em context-dependent} update rule, weights for each learner in each set in the partition ${\cal P}_T$ is updated independently from the weights in the other sets based on the same stochastic gradient descent rule.
Total error and exploitation error percentages of ensemble CoS is given in Table \ref{tab:weights} for cases S1 and S2. Comparing Tables \ref{tab:sim_results} and \ref{tab:weights}, we see that when the weight update rule is context-independent, there is $21\%$ and $51\%$ improvement in the error of ensemble CoS compared to CoS for cases S1 and S2 respectively. However, when the weight update rule is context dependent, ensemble CoS performs worse than CoS. This is due to the fact that the convergence rate being smaller for context-dependent weights since weights for each $P_l \in {\cal P}_T$ are updated independently. In Table \ref{tab:weights}, we also give the percentage of prediction errors made at the time slots in which all learners are simultaneously in the exploitation phase of CoS. The difference between total error percentage and exploitation error percentage gives the percentage of errors made in exploration steps.
\begin{table}[t]
\centering
{\fontsize{8}{6}\selectfont
\setlength{\tabcolsep}{.3em}
\begin{tabular}{|l|c|c|}
\hline
ensemble CoS & context-dependent weights & context-indep weights \\
Parameters: Z1 & S1, S2 & S1, S2 \\
\hline
total error $\%$ & 5.9, 10.2 & 3.8, 4.94  \\
\hline
exploitation error $\%$ & 2.9, 6.8 & 1.76, 2.17  \\
\hline
\end{tabular}
}
\caption{Total error percentage, and error percentage of the errors made in exploitation steps for CoS with ensemble learner.}
\label{tab:weights}
\vspace{-0.2in}
\end{table}
}
\comment{
In Figure \ref{fig:CoS_res} and \ref{fig:DCZA_res}, the number of classification errors is given as a function of time for CoS and DCZA, respectively for S1 and Z1, for all three different context information. The error function is {\em spike-shaped} because of the training phases that happens when a change in the context occurs. Note that when the context is previous label or {\em srcbytes}, the spikes follow the changes in the data, while when the context is time, the spikes are uniformly distributed. 

\begin{figure}
\begin{center}
\includegraphics[width=0.8\columnwidth]{CoS_res}
\caption{Number of errors made by CoS as a function of time.} 
\label{fig:CoS_res}
\end{center}
\end{figure}

\begin{figure}
\begin{center}
\includegraphics[width=0.8\columnwidth]{DCZA_res}
\caption{Number of errors made by DCZA as a function of time.} 
\label{fig:DCZA_res}
\end{center}
\end{figure}
}

Secondly, we simulate both CoS and DCZA for the case when the label is not always observed. Our results are given in Table \ref{tab:errorperc} for learner 1 when there are 4 learners. As the probability of observing the label, i.e., $p_r$, decreases, the error percentage of both CoS and DCZA grows. This is due to the fact that more time steps are spent in exploration and training phases to ensure that the estimates of the rewards of arms in ${\cal K}_i$ are accurate enough. 

\comment{
\begin{table}[t]
\centering
{\fontsize{8}{6}\selectfont
\setlength{\tabcolsep}{.1em}
\begin{tabular}{|l|c|c|c|c|c|}
\hline
Learner & 1 &2 &3 & 4\\
\hline
Classification  & Naive Bayes,  &Multilayer Perceptron, & RBF Network, & Random Tree,  \\
Function   & Logistic & Voted Perceptron &J48 & Random Forest\\
\hline
 Error  & 42.9,  & 33,  & 43, & 27.5,  \\
percentage & 42.9 & 42.9 &  42.9 & 22\\
\hline
\end{tabular}
}
\add{\vspace{-0.1in}}
\caption{Error percentages of the base classification functions for synthetic data}
\vspace{-0.25in}
\label{tab:basesynt}
\end{table}
}

\comment{
\begin{figure}
\begin{center}
\includegraphics[width=0.8\columnwidth]{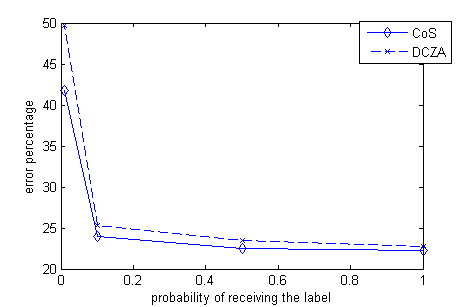}
\caption{Error percentage as a function of the probability of observing the label for the instance.} 
\label{fig:somelabel}
\end{center}
\end{figure}
}

Thirdly, in Table \ref{tab:nlearn}, error percentages of CoS and DCZA for learner 1 are given as functions of the number of learners present in the system. Comparing Tables \ref{tab:sim_setup} and \ref{tab:nlearn}, we see that adding a learner whose classification functions are worse than the classification functions of the current learners increases the error percentage due to the increase in the number of training and exploration steps, while adding a learner whose classification functions are better than the best classification function of the current learners decreases the error percentage.

\begin{table}[t]
\centering
{\fontsize{8}{6}\selectfont
\setlength{\tabcolsep}{.1em}
\begin{tabular}{|l|c|c|c|c|c|}
\hline
(Setting) Algorithm /$p_r$ & 1 & 0.5 & 0.1 & 0.01 \\
\hline
(Z1,S2) CoS (context is time) error $\%$ & 10 & 13.9 & 36.4 & 47.1 \\
\hline
(Z1,S2) DCZA (context is time) error $\%$ & 4.8 & 4.8 & 16.3 & 56.6 \\
\hline
\end{tabular}
}
\caption{Error percentages of CoS and DCZA as a function of $p_r$ (probability of receiving the label at each time slot) when context is time.}
\vspace{-0.2in}
\label{tab:errorperc}
\end{table}

\begin{table}[t]
\centering
{\fontsize{8}{6}\selectfont
\setlength{\tabcolsep}{.1em}
\begin{tabular}{|l|c|c|c|c|c|}
\hline
$\#$ of learners & 1 & 2 & 3 & 4 \\
\hline
CoS error $\%$ & 49.8 & 49.7 & 50.2 & 22.3 \\
\hline
DCZA error $\%$ & 49.8 & 49.8 & 49.8 & 22.7 \\
\hline
\end{tabular}
}
\caption{Error percentages of CoS and DCZA for learner 1, as a function of the number of learners present in the system.}
\vspace{-0.2in}
\label{tab:nlearn}
\end{table}

Fourthly, in Table \ref{tab:callcost}, we write the error percentages and the percentage of times each arm in ${\cal K}_i$ is chosen by learner $1$ as a function of the cost of calling other learners for CoS. We let $d_{1_j} = d$ for all $1_j \in {\cal M}_{-1}$. Since the goal of the learner is to maximize the expected accuracy minus cost, we see that learner 1 selects its own classification functions more often as the cost $d$ is increased. However, this results in higher error percentage since classification functions of learner 1 are suboptimal. 

\begin{table}[t]
\centering
{\fontsize{8}{6}\selectfont
\setlength{\tabcolsep}{.1em}
\begin{tabular}{|l|c|c|c|c|c|}
\hline
d & error $\%$ & training $\%$  & selection (except training/exploration) $\%$  \\
& & of learners 2,3,4 & of learners 1,2,3,4  \\ 
\hline
0 & 0.9 & 0.27, 0.23, 0.16 & 52.9, 47, 0.1, 0\\
\hline
0.5 & 1 & 0.27, 0.23, 0.16 & 53, 47, 0, 0 \\
\hline
0.7 & 23.7 &0.27, 0.23, 0.16 & 100, 0, 0, 0\\
\hline
\end{tabular}
}
\caption{Error and arm selection percentages as a function of calling cost}
\vspace{-0.25in}
\label{tab:callcost}
\end{table}

Finally, in Table \ref{tab:onlycontext}, we give the error percentage of learner 1 using CoS with parameter values Z1, when learner 1 only sends its context information to other learners but not its data. When called by learner 1, other learners do not predict based on their classification functions but they choose the prediction that has the highest percentage of being correct so far at the hypercube that the context of learner 1 belongs.
From these results we see that for S1 (two good classification functions), the error percentage of learner 1 is slightly higher than the error percentage when it sends also its data for contexts A1 and A3, while its error percentage is better for context A2. However, for S2 (no good classification functions), sending only context information produces very high error rates for all types of contexts. This suggests correlation of the context with the label and data is not enough to have low regret when only context information is sent. There should be classification functions which have low error rates.
Similar results hold for DCZA as well when only context information is sent between the learners. 

\begin{table}[t]
\centering
{\fontsize{8}{6}\selectfont
\setlength{\tabcolsep}{.1em}
\begin{tabular}{|l|c|c|c|c|}
\hline
(Setting) Algorithm & previous label (A1) & srcbytes (A2) & time   \\
& is context & is context & is context \\
\hline
(Z1,S1) CoS error $\%$ & 2.68 & 3.64 & 6.43  \\
\hline
(Z1,S2) CoS error $\%$ & 23.8 & 42.6 & 29 \\
\hline
\end{tabular}
}
\caption{Error percentages of CoS for learner 1, when learner 1 only sends its context information to the other learners.}
\vspace{-0.3in}
\label{tab:onlycontext}
\end{table}

\vspace{-0.1in}
\section{Conclusion} \label{sec:conc}
In this paper we considered two novel online learning algorithms for decentralized Big Data mining using context information about the high dimensional data. We provided several extensions of these algorithms to deal with challenges specific to Big Data mining such as concept drift, delayed feedback and ensemble learning. For some of these extensions, we proved sublinear regret results. We provided extensive numerical results both using real-world and synthetic data sets to illustrate how these algorithms operate under different data/context streams.
\rmv{An interesting research direction is to see the performance of CoS when combined with ensemble learning approaches. This will increase the communication and computation costs of the learners but the improvement in classification accuracy can be large compared to existing online ensemble learning methods such as SWA.}
\add{\vspace{-0.2in}}

\remove{
 \appendices
 \section{A bound on divergent series} \label{app:seriesbound}
 For $p>0$, $p \neq 1$,
\begin{align*}
\sum_{t=1}^{T} \frac{1}{t^p} \leq 1 + \frac{T^{1-p} -1}{1-p}
\end{align*}
\begin{proof}
See \cite{chlebus2009approximate}.
\end{proof}
}
\bibliographystyle{IEEE}
\bibliography{OSA}


\end{document}